\theoremstyle{plain}
\icmltitlerunning{Robust One Class Classification using Signed Distance Function}
\begin{document}

\twocolumn[
\icmltitle{Robust One-Class Classification with Signed Distance Function\\ using 1-Lipschitz Neural Networks}



\icmlsetsymbol{equal}{*}


\begin{icmlauthorlist}
\icmlauthor{Louis B\'ethune}{equal,sch}
\icmlauthor{Paul Novello}{equal,yyy}
\icmlauthor{Guillaume Coiffier}{nge}
\icmlauthor{Thibaut Boissin}{yyy}
\icmlauthor{Mathieu Serrurier}{sch}
\icmlauthor{Quentin Vincenot}{comp}
\icmlauthor{Andres Troya-Galvis}{comp}
\end{icmlauthorlist}

\icmlaffiliation{yyy}{DEEL, IRT Saint Exupéry}
\icmlaffiliation{comp}{Thales Alénia Space}
\icmlaffiliation{sch}{IRIT, Université Paul Sabatier}
\icmlaffiliation{nge}{Université de Lorraine, CNRS, Inria, LORIA}

\icmlcorrespondingauthor{Louis Bethune}{louis.bethune@univ-toulouse.fr}

\icmlkeywords{signed distance function, Lipschitz neural network, robustness, one class learning, anomaly detection, neural implicit surface}

\vskip 0.3in
]



\printAffiliationsAndNotice{\icmlEqualContribution } 


\begin{abstract}
    We propose a new method, dubbed One Class Signed Distance Function (OCSDF), to perform One Class Classification (OCC) by provably learning the Signed Distance Function (SDF) to the boundary of the support of any distribution. The distance to the support can be interpreted as a normality score, and its approximation using 1-Lipschitz neural networks provides robustness bounds against $l2$ adversarial attacks, an under-explored weakness of deep learning-based OCC algorithms. As a result, OCSDF comes with a new metric, certified AUROC, that can be computed at the same cost as any classical AUROC. We show that OCSDF is competitive against concurrent methods on tabular and image data while being way more robust to adversarial attacks, illustrating its theoretical properties. Finally, as exploratory research perspectives, we theoretically and empirically show how OCSDF connects OCC with image generation and implicit neural surface parametrization.
\end{abstract}

\section{Introduction}

One class classification (OCC) is an instance of binary classification where all the points of the dataset at hand belong to the same (positive) class. The challenge of this task is to construct a decision boundary without using points from the other (negative) class. It has various safety-critical applications in anomaly detection, for instance to detect banking fraud, cyber-intrusion or industrial defect, in out-of-distribution detection, to prevent wrong decisions of Machine Learning models, or in Open-Set-Recognition. However, OCC algorithms suffer from limitations such as the \textbf{lack of negative data}, and \textbf{robustness issues} \cite{occrobust}, the latter being an under-explored topic in the OCC spectrum. Even though some algorithms do not use negative examples, many work cope with the lack of negative data with Negative Sampling, either artificially \cite{sipple_interpretable_2020} or using outlier exposure \cite{hendrycks_benchmarking_2019,fort_exploring_2021}. However, such samplings are often biased or heuristic. As for robustness, although some works design robust algorithms \cite{goyal_drocc_2020,lo_adversarially_2022}, it is always only empirically demonstrated \cite{hendrycks_benchmarking_2019}. 

\begin{figure*}[!t]
    \centering
    \includegraphics[width=0.9\linewidth, trim=0 0 0 0, clip]{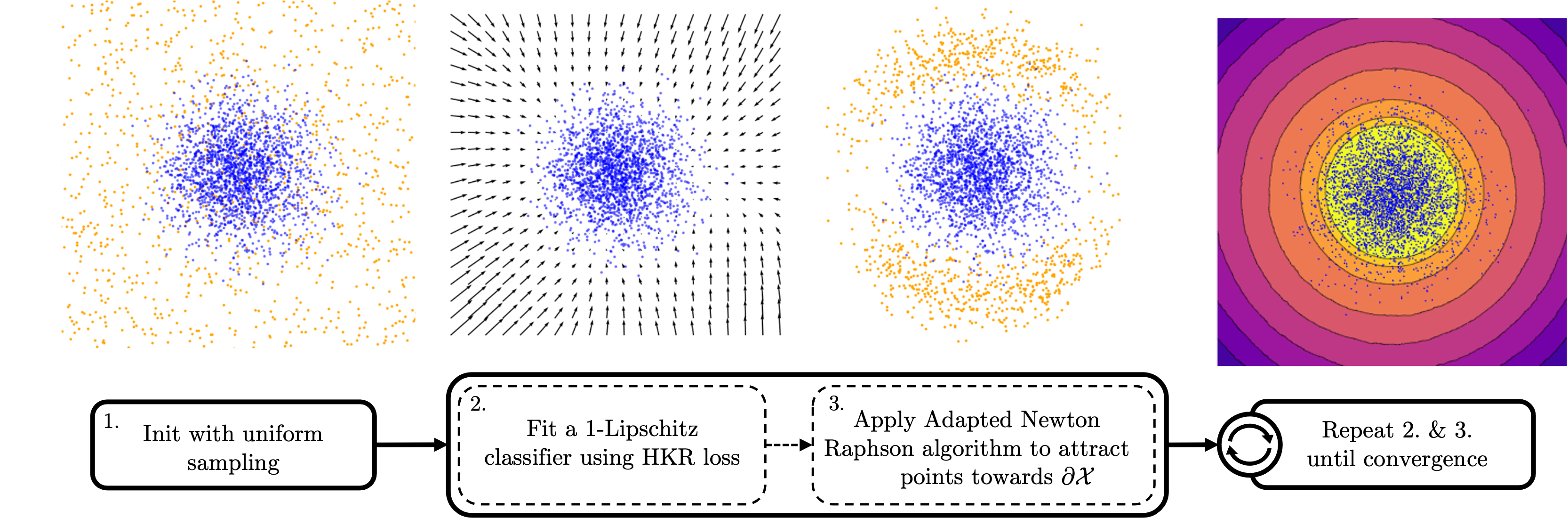}
    \caption{Summary of One Class Signed Distance Function (OCSDF). We start with an uniform negative sampling, then we fit a 1-Lipschitz classifier $f_{\theta}$ using the Hinge Kantorovich-Rubinstein loss. We apply the Adapted Newton Raphson algorithm \ref{alg:newtonraphson} to attract the points towards the boundary of the domain $\partial \mathcal{X}$ thanks to the smoothness of $f_{\theta}$, which in addition allows providing robustness certificates.}
    \label{fig:main}
\end{figure*}

In this paper, we introduce a new framework to perform OCC based on the Signed Distance Function (SDF), a function traditionally used in computer graphics. Assume the positive samples are independently and identically obtained from a distribution $\Prob_X$ with compact support $\support\subset\Reals^d$. Let $\partial\support=\closure{\support}/\interior{\support}$ be the boundary of the distribution. The Signed Distance Function is the function $\Sdf:\Reals^d\rightarrow\Reals$:
\begin{equation}
\Sdf(x)=\begin{cases}
d(x,\partial\support)&\text{ if }x\in\support,\\
-d(x,\partial\support)&\text{ otherwise,}\\
\end{cases}
\end{equation}
where $d(x,\partial\support)=\inf_{z\in\partial\support} \|x-z\|_2$. The idea of our algorithm, which we call One Class Signed Distance Function (OCSDF) is to learn the SDF to the boundary of the positive data distribution and use it as a normality score. We show that the Hinge Kantorovich-Rubinstein (HKR) loss introduced by~\cite{serrurier2021achieving} allows provably learning the SDF with a 1-Lipschitz network. 

SDF exhibits desirable properties. First, by implicitly parametrizing the domain $\support$, it allows efficiently sampling points outside of $\support$ and performing principled Negative Sampling. Second, the SDF fulfils the Eikonal equation: $\|\nabla_x \Sdf(x)\|=1$. In particular, $\Sdf$ is 1-Lipschitz with respect to $l2$-nom : $\forall x,z\in\Reals^d, \|\Sdf(x)-\Sdf(z)\|_2\leq \|x-z\|_2$. This property provides exact robustness certificates for OCSDF in the form of a certified AUROC that can be computed at the same cost as AUROC. This regularity translates into solid empirical robustness as compared to other OCC baselines. In other words, OCSDF alleviates the \textbf{lack of negative data} and the \textbf{robustness issue}. We go further and highlight interesting research perspectives regarding OCSDF. Indeed, we show that learning the SDF with a 1-Lipschitz network enables a generative procedure that allows visualizing points at the boundary of $\support$. Moreover, It implicitly parametrizes the shape of $\support$, which connects One-Class Classification with implicit surface parametrization, intensively used in computer graphics for shape reconstruction. 

Our contributions are as follows. \textbf{(1)} We introduce a new OCC framework based on the Signed Distance Function to the boundary of the data distribution. We theoretically demonstrate that the SDF can be learned with a 1-Lipschitz neural net using the Hinge Kantorovich-Rubinstein (HKR) loss and Negative Sampling; \textbf{(2)} We evaluate the performances of OCSDF on several benchmarks and show its benefits for theoretical and empirical robustness; and \textbf{(3)} we demonstrate how OCSDF extends the applications of One Class Classification from traditional OOD detection to generative visualization and implicit surface parametrization for shape reconstruction from point clouds.



\section{Related Work}

\paragraph{One Class Classification (OCC)} OCC is an instance of binary classification where all the points of the dataset at hand belong to the same (positive) class. The challenge of this task is to construct a decision boundary without using points from the other (negative) class. OCC amounts to finding a domain containing the support of the data distribution. That is why OCC is mainly used in Out Of Distribution (OOD), anomaly or novelty detection, with positive samples considered In Distribution (ID) and negative ones as OOD, anomalies or novelties. This task dates back to \cite{sager_iterative_1979, hartigan_estimation_1987} and was popularized for anomaly detection with One-class Support Vector Machines (OC-SVM)\cite{scholkopf_support_1999}. Since then, the field of OCC has flourished with many well-established algorithms such as Local Outlier Factors \cite{breuning_lof_2000}, Isolation Forests \cite{liu2008isolation} and their variants (see \cite{han_adbench_nodate} for a thorough benchmark). More recently, since Deep-SVDD \cite{ruff2018deep} - followed by several works such as \cite{bergman2019classification,golan_geometric_2018,goyal_drocc_2020,zenati_adversarially_2018,sabokrou2018adversarially} - Deep Learning has emerged as a relevant alternative to perform OCC due to its capacities to handle large dimensional data. However, methods of this field suffer from their lack of \textbf{robustness and certifications}, which makes them vulnerable to adversarial attacks. In addition, they always struggle to cope with the \textbf{lack of OOD data}. In this paper, we tackle these problems with an OCC algorithm based on approximating the SDF using \textbf{1-Lipschitz neural nets}. In addition, the SDF being intensively used in Computer Graphics, our algorithm establishes a new link between OCC and \textbf{implicit surface} parametrization.

\paragraph{SDF for neural implicit surfaces}

Historically, signed distance functions have been used in computer graphics to parametrize a surface as the level set of some function~\cite{novelloExploringDifferentialGeometry2022}. Given an incomplete or unstructured representation of a geometrical object (like a 3D point cloud or a triangle soup), recent methods aim at representing a smooth shape either as vectors in the latent space of a generative model~\cite{achlioptasLearningRepresentationsGenerative2018, ben-hamuMultichartGenerativeSurface2018, groueixAtlasNetPapierMAch2018, chouGenSDFTwoStageLearning2022} or directly as parameters of a neural net~\cite{parkDeepSDFLearningContinuous2019, atzmonSALSignAgnostic2020}.
The first method allows for easy shape interpolation, while the latter proved to be a more robust approach~\cite{daviesEffectivenessWeightEncodedNeural2021}. Those neural implicit surfaces alleviate both the problems related to memory requirements of voxel-based representations and the combinatorial nature of meshes, making them ideally suited for rendering using ray marching~\cite{hartSphereTracingGeometric1995} and constructive solid geometry. In those contexts, the constraint $\|\nabla_x f(x)\|\leq 1$ is necessary to guarantee the validity of the geometrical query while having $\|\nabla_x f(x)\|$ as close as possible to 1 allows for greedier queries and faster computation times. In practice, training an SDF requires a dataset $(p,d)$ of points $p \in \mathbb{R}^3$ with their corresponding signed distance $d$ to the desired surface. Computing those distances requires the existence and availability of a ground truth, which is not always the case. Moreover, training tends to be unstable in general, and special care is needed for most computer graphics applications~\cite{sharpSpelunkingDeepGuaranteed2022}. Our method can instead be trained to approximate a surface without prior knowledge of the distances and is provably robust.

\paragraph{1-Lipschitz neural nets}

As noticed in~\cite{bethune2022pay,brau2022robust} 1-Lipschitz neural nets ~\cite{stasiak2006fast,li2019orthogonal,su2021scaling} are naturally linked to the signed distance function. In particular, they are 1-Lipschitz, i.e. they fulfil $\|\nabla_x f(x)\|\leq 1$ on the whole input space. They boast a rich literature, especially for convolutional neural nets~\cite{gayer2020convolutional,wang2020orthogonal,liu2021convolutional,achour2021existence,li2019preventing,trockman2021orthogonalizing,singla2021skew}. These networks benefit from several appealing properties: they are not subject to exploding nor vanishing gradients~\cite{li2019preventing}, they generalize well~\cite{bartlett2019nearly,bethune2022pay}, and they are elegantly connected to optimal transport theory~\cite{arjovsky2017wasserstein,serrurier2021achieving}. 1-Lipschitz neural nets also benefit from certificates against $l2$-attacks~\cite{li2019preventing,NEURIPS2018_48584348}; hence the approximation of $\Sdf$ is robust against $l2$-adversarial attacks \textit{by design}.


\paragraph{Robustness and certification} While robustness comes with many aspects, this work focuses mainly on adversarial attacks \cite{szegedy2013}. Extensive literature explores the construction of efficient attacks \cite{Goodfellow2014a} \cite{Brendel2018} \cite{carlini2017towards}. As nearly any deep learning architecture is vulnerable, defenses have also been developed with notably adversarial training \cite{madry2017towards} \cite{zhang2019theoretically}\cite{shafahi2019adversarial}, or randomized smoothing \cite{cohen2019certified}\cite{carlini2022certified}. Since early works pointed out the link between the Lipschitz constant of a network and its robustness, Lipschitz-constrained networks have also been studied \cite{anil2019sorting} \cite{serrurier2021achieving}.
Similarly to classifiers, OCC Algorithms based on deep neural nets suffer from their natural weakness to adversarial attacks \cite{occrobust}. Although some works design robust algorithms \cite{goyal_drocc_2020,lo_adversarially_2022}, the robustness achieved is always only empirically demonstrated \cite{hendrycks_benchmarking_2019}. Few works provide theoretical certifications (we only found \cite{bitterwolf_certifiably_2020} based on interval bounds propagation). In this work, we leverage the properties of 1-Lipschitz networks to provide certifications.

\paragraph{Tackling the lack of OOD data} The previously mentioned OCC and OOD algorithms, as well as many others \cite{hendrycks_baseline_2018,hsu_generalized_2020-1} are designed to avoid the need for OOD data. However, some works aim at falling back to classical binary classification by artificially generating negative samples. The idea of Negative Sampling is not recent and appeared in \cite{forrest1994self} for detecting computer viruses or to emulate the distinction made by antibodies between pathogens and body cells \cite{gonzalez2002combining}. It has been introduced in anomaly detection by \cite{ayara_negative_2002} and studied by several works summarized in \cite{jinyin2011study}, but has lost popularity due to its practical inefficiency (e.g. compared to One-Class Support Vector Machines (OCSVM) \cite{stibor_is_2005}). Recently, some works revived the idea of using OOD data, either by artificial negative sampling \cite{lee_training_2018-1,sipple_interpretable_2020,goyal_drocc_2020,pourreza_g2d_2021}, or by using OOD data from other sources, a procedure called outlier exposure \cite{fort_exploring_2021,hendrycks_deep_2019}. However, outlier exposure suffers from bias since OOD data does not come from the same data space. Therefore, we follow the first idea and sample negative data points close to the domain $\mathcal{X}$, thanks to the orthogonal neural nets-based estimation of the SDF.



\section{Method}

The method aims to learn the Signed Distance Function (SDF) by reformulating the one-class classification of $\Prob_X$ as a binary classification of $\Prob_X$ against a carefully chosen distribution $Q(\Prob_X)$. We show that this formulation yields desirable properties, especially when the chosen classifier is a 1-Lipschitz neural net trained with the Hinge Kantorovich-Rubinstein (HKR) loss.
  
\subsection{SDF learning formulated as binary classification}

 We formulate SDF learning as a binary classification that consists of classifying samples from $\Prob_X$ against samples from a complementary distribution, as defined below. 

\begin{definition}[$\bedd$ Complementary Distribution (informal)]\label{def:beddinformal}
Let $Q$ be a distribution of compact support included in $B$, with disjoint support from that of $\Prob_X$ that ``fills'' the remaining space, with $2\epsilon$ gap between $\support$ and $\supp Q$. Then we write $Q\bedd\Prob_X$. 
\end{definition}

A formal definition is given in Appendix \ref{app:pandc}. For image space with pixel intensity in $[0,1]$, we take $B=[0, 1]^{W\times H\times C}$. For tabular data, a hypercube of side length ten times the standard deviation of the data along the axis. A data point falling outside $B$ is trivially considered anomalous due to aberrant values.  Binary classification between $\Prob_X$ and any $Q\bedd\Prob_X$ allows the construction of the optimal signed distance function, using the Kantorovich-Rubinstein (HKR) Hinge loss~\cite{serrurier2021achieving}, thanks to the following theorem.

\begin{restatable}{theorem}{hkrsdf}{\normalfont\textbf{SDF Learning with HKR loss.}} Let $\hkr(yf(x))=\lambda\max{(0, m-yf(x))}-yf(x)$ be the Hinge Kantorovich Rubinstein loss, with margin $m=\epsilon$, regularization $\lambda>0$, prediction $f(x)$ and label $y\in\{-1,1\}$. Let $Q$ be a probability distribution on $B$. Let $\Empirical^{\text{hkr}}(f)$ be the population risk:
\begin{equation}
\begin{aligned}
    \Empirical^{\text{hkr}}(f,\Prob_X,Q)\defeq&\Expect_{x\sim\Prob_X}[\hkr(f(x))]\\
    &+\Expect_{z\sim Q}[\hkr(-f(z))].
\end{aligned}
\end{equation}
Let $f^{*}$ be the minimizer of population risk, whose existence is guaranteed with Arzel\`a-Ascoli theorem~\cite{bethune2022pay}:
\begin{equation}\label{eq:empiricalminimizer}
    f^{*}\in\arginf_{f\in\Lip}\Empirical^{\text{hkr}}(f,\Prob_X,Q),
\end{equation}
where $\Lip$ is the set of Lipschitz functions $\mathbb{R}^d \rightarrow \mathbb{R}$ of constant $1$. Assume that $Q\bedd\Prob_X$. \textbf{Then}, $f^{*}$ approximates the signed distance function over $B$:
\begin{equation}
    \begin{aligned}
        \forall x\in \support,\quad&\Sdf(x)= f^{*}(x)-m,\\
        \forall z\in\supp Q,\quad&\Sdf(z)= f^{*}(z)-m.
    \end{aligned}
\end{equation}\label{thm:hkrsdf}
Moreover, for all $x\in\supp Q\cup \support$:
$$\text{sign}(f(x))=\text{sign}(\Sdf(x)).$$  
\end{restatable}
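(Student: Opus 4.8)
\emph{Proof plan.} The idea is to exhibit $\Sdf+m$ as an explicit minimiser, to identify the optimal value of the risk through Kantorovich--Rubinstein duality, and then to argue that every minimiser must coincide with $\Sdf+m$ on $\support\cup\supp Q$. \textbf{(i) The candidate is admissible and makes the hinge terms vanish.} The signed distance function to the boundary of a closed set is $1$-Lipschitz (it satisfies $\|\nabla_x\Sdf(x)\|=1$ a.e.), so $\Sdf+m\in\Lip$. On $\support$ one has $\Sdf\ge 0$, hence $\Sdf+m\ge m$ and $\max(0,m-(\Sdf+m))=0$ there; on $\supp Q$ the $2\epsilon$-gap gives $d(z,\partial\support)\ge 2\epsilon=2m$, i.e.\ $\Sdf(z)\le -2m$, hence $\Sdf(z)+m\le -m$ and $\max(0,m+(\Sdf(z)+m))=0$ there. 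Consequently $\Empirical^{\text{hkr}}(\Sdf+m,\Prob_X,Q)=\Expect_{\Prob_X}[-(\Sdf+m)]+\Expect_{Q}[\Sdf+m]=\Expect_{Q}[\Sdf]-\Expect_{\Prob_X}[\Sdf]$ (the margin $m$ cancels).

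\textbf{(ii) $\Sdf$ is an optimal Kantorovich potential, so $\Sdf+m$ is a minimiser.} For any $f\in\Lip$ the two hinge terms are non-negative, so $\Empirical^{\text{hkr}}(f,\Prob_X,Q)\ge\Expect_{\Prob_X}[-f]+\Expect_{Q}[f]\ge -\sup_{g\in\Lip}\big(\Expect_{\Prob_X}[g]-\Expect_{Q}[g]\big)=-W_1(\Prob_X,Q)$, the last step being Kantorovich--Rubinstein duality ($W_1$ the Wasserstein-$1$ distance). The geometric heart of the proof, which is exactly what the formal version of Definition~\ref{def:beddinformal} (Appendix~\ref{app:pandc}) is designed to provide, is that $\Prob_X$ can be transported onto $Q$ along outward normal rays of $\partial\support$: since $\supp Q$ fills $B$ up to a collar of width exactly $2\epsilon$ around $\support$, the mass at each $x\in\support$ is pushed straight across a nearest boundary point $p$ to some $z\in\supp Q$, along a segment of length $d(x,\partial\support)+d(z,\partial\support)$ on which $\Sdf$ decreases at unit speed; hence $\Sdf(x)-\Sdf(z)=\|x-z\|$ on the support of this plan, $\Sdf$ saturates the duality, and $\Expect_{\Prob_X}[\Sdf]-\Expect_{Q}[\Sdf]=W_1(\Prob_X,Q)$. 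Together with (i), $\Empirical^{\text{hkr}}(\Sdf+m,\Prob_X,Q)=-W_1(\Prob_X,Q)$, the global lower bound, so $\Sdf+m$ is a minimiser.

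\textbf{(iii) Every minimiser equals $\Sdf+m$ on $\support\cup\supp Q$.} Let $f^*$ be any minimiser; then $\Empirical^{\text{hkr}}(f^*,\Prob_X,Q)=-W_1(\Prob_X,Q)$, so every inequality in (ii) is an equality. This forces $\Expect_{\Prob_X}[\max(0,m-f^*)]=\Expect_{Q}[\max(0,m+f^*)]=0$, hence (as $\lambda>0$ and $f^*$ is continuous) $f^*\ge m$ on $\support$ and $f^*\le -m$ on $\supp Q$; and it forces $f^*$ to be an optimal Kantorovich potential, hence to saturate the same normal-ray transport plan. First, for $p\in\partial\support$ and the point $z_p\in\supp Q$ at distance exactly $2\epsilon$ from $p$ (present since the collar has width $2\epsilon$), the bounds $f^*(p)\ge m$, $f^*(z_p)\le -m$ and $|f^*(p)-f^*(z_p)|\le\|p-z_p\|=2m$ force $f^*(p)=m$. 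Second, for a plan-a.e.\ pair $(x,z)$ with nearest boundary point $p$ on $[x,z]$, the $1$-Lipschitz inequalities $f^*(x)-f^*(p)\le d(x,\partial\support)$ and $f^*(p)-f^*(z)\le d(z,\partial\support)$ sum to the equality $f^*(x)-f^*(z)=\|x-z\|$ and so are both equalities, giving $f^*(x)=f^*(p)+\Sdf(x)=\Sdf(x)+m$ and $f^*(z)=f^*(p)+\Sdf(z)=\Sdf(z)+m$; by continuity of $f^*$ and density of the plan's marginals this extends to all of $\support\cup\supp Q$. The sign statement is then immediate: $\Sdf>0$ and $f^*=\Sdf+m>0$ on $\interior{\support}$, while $\Sdf\le -2m<0$ and $f^*=\Sdf+m\le -m<0$ on $\supp Q$.

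\textbf{Expected main obstacle.} All the substance is in step (ii): showing that the hypothesis ``$Q$ fills the remaining space with a $2\epsilon$ collar'' genuinely yields an optimal transport plan carried by \emph{unobstructed} normal rays, so that $\Sdf$ is a bona fide Kantorovich potential. This is precisely the clause the formal definition in Appendix~\ref{app:pandc} must be tailored to secure, and a rigorous argument also has to handle the measure-zero medial axis of $\support$ (where $\Sdf$ is non-differentiable and nearest boundary points are not unique) and boundary effects near $\partial B$.
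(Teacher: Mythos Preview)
Your overall architecture (exhibit $\Sdf+m$, show it is optimal, argue uniqueness) is reasonable, but the route you take in step~(ii) is both different from the paper's and contains a real gap.

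\textbf{The gap in step (ii).} You lower-bound the risk by $-W_1(\Prob_X,Q)$ and then claim that $\Sdf$ is an optimal Kantorovich potential, so that this bound is attained. But the formal definition of $Q\bedd\Prob_X$ in Appendix~\ref{app:pandc} constrains only the \emph{support} of $Q$ (full support on $\{z\in B:\ d(z,\support)\ge 2\epsilon\}$), not its density. Whether $\Sdf$ is a Kantorovich potential depends on the densities of both measures, not just their supports. A one–dimensional picture already breaks your claim: take $\support=[-1,1]$, $m=\epsilon=1$, $B=[-10,10]$, $\Prob_X$ uniform on $[-1,1]$, and $Q$ supported on $[-10,-3]\cup[3,10]$ but with almost all its mass near $z=3$. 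Then $\Expect_{\Prob_X}[\Sdf]-\Expect_Q[\Sdf]\approx 2.5$, while $W_1(\Prob_X,Q)\approx 3$, so $\Sdf$ is \emph{not} an optimal potential and the equality $\Empirical^{\text{hkr}}(\Sdf+m)=-W_1(\Prob_X,Q)$ fails. Your ``normal-ray transport plan'' exists as a coupling, but nothing forces it to be the cheapest one; you are conflating a support-geometric statement with a measure-theoretic one. Since your step~(iii) (in particular the conclusion that the hinge of $f^*$ vanishes) is deduced from the tightness of the $-W_1$ bound, it collapses along with~(ii).

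\textbf{How the paper argues instead.} The paper never passes through $W_1$. It works directly with the pointwise Lipschitz sandwich that the $2\epsilon$-collar creates. Granting that the minimiser has zero hinge (the paper takes this from Proposition~2 of~\cite{serrurier2021achieving} for $2m$-separated classes), one has $f^*\ge m$ on $\support$ and $f^*\le -m$ on $\supp Q$. For each $p\in\partial\support$ there is a point $z_p\in\supp Q$ at distance exactly $2m$, and the two inequalities $f^*(p)\ge m$, $f^*(z_p)\le -m$ together with $|f^*(p)-f^*(z_p)|\le 2m$ pin $f^*(p)=m$. Then for any $x\in\support$ with nearest boundary point $\partial x$, $f^*(x)\le f^*(\partial x)+\|x-\partial x\|=m+\Sdf(x)$; and for any $z\in\supp Q$, $f^*(z)\ge f^*(\partial z)-\|z-\partial z\|=m+\Sdf(z)$. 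These two pointwise bounds are simultaneously saturated by $\Sdf+m$, and the linear KR term $-yf$ pushes $f^*$ to the bounds, yielding $f^*=\Sdf+m$ on $\support\cup\supp Q$ directly. This argument uses only the geometry of the supports and the $1$-Lipschitz constraint, so it is insensitive to the densities of $\Prob_X$ and $Q$ --- exactly the robustness your duality route lacks. In short, drop the $-W_1$ lower bound and the Kantorovich-potential claim; the boundary-pinning plus pointwise Lipschitz bounds you already wrote down in step~(iii) are the whole proof once combined with the hinge-vanishing step.
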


Note that if $m=\epsilon\ll 1$, then we have $f^{*}(x)\approx \Sdf(x)$. In this work, we parametrize $f$ as a 1-Lipschitz neural network, as defined below, because they fulfil $f\in\Lip$ by construction.

\begin{definition}[1-Lipschitz neural network (informal)]
Neural network with Groupsort activation function and orthogonal transformation in affine layers parameterized like in~\cite{anil2019sorting}.
\end{definition}


Details about the implementation can be found in Appendix \ref{app:ortho}. Theorem \ref{thm:hkrsdf} tells us that if we characterize the complementary distribution $Q$, we can approximate the SDF with a 1-Lipschitz neural classifier trained with HKR loss. We now need to find the complementary distribution $Q$.

\subsection{Finding the complementary distribution by targeting the boundary}\label{sec:findcompl}

We propose to seek $Q$ through an alternating optimization process: at every iteration $t$, a proposal distribution $Q_t$ is used to train a 1-Lipschitz neural net classifier $f_t$ against $\Prob_X$ by minimizing empirical HKR loss. Then, the proposal distribution is updated in $Q_{t+1}$ based on the loss induced by $f_t$, and the procedure is repeated. 

We suggest starting from the uniform distribution: $Q_0=\Uniform(B)$. Observe that in high dimension, due to the \textit{curse of dimensionality}, a sample $z\sim Q_0$ is unlikely to satisfy $z\in\support$. Indeed the data lies on a low dimensional manifold $\support$ for which the Lebesgue measure is negligible compared to $B$. Hence, in the limit of small sample size $n\ll\infty$, a sample ${Z_n\sim Q_0^{\otimes n}}$ fulfills ${Z_n\bedd\Prob_X}$. This phenomenon is called the Concentration Phenomenon and has already been leveraged in anomaly detection in \cite{sipple_interpretable_2020}. However, the \textit{curse} works both ways and yields a high variance in samples $Z_n$. Consequently, the variance of the associated minimizers $f_0$ of equation~\ref{eq:empiricalminimizer} will also exhibit a high variance, which may impede the generalization and convergence speed. Instead, the distribution $Q_t$ must be chosen to produce higher density in the neighborhood of the boundary $\partial\support$. The true boundary is unknown, but the level set $\Level_t=f_t^{-1}(\{-\epsilon\})$ of the classifier can be used as a proxy to improve the initial proposal $Q_0$. We start from $z_0\sim Q_0$, and then look for a displacement $\delta\in\Reals^d$ such that $z+\delta\in\Level_t$. To this end, we take inspiration from the multidimensional Newton-Raphson method and consider a linearization of $f_t$:
\begin{equation}
    f_t(z_0+\delta) \approx f_t(z_0) + \langle \nabla_x f_t(z_0),\delta \rangle.
\end{equation}
Since 1-Lipschitz neural nets with GroupSort activation function are piecewise \textit{affines}~\cite{tanielian2021approximating}, the linearization is locally exact, hence the following property.
\begin{property}
Let $f_t$ be a 1-Lipschitz neural net with GroupSort activation function. Almost everywhere on $z_0\in\Reals^d$, there exists $\delta_0>0$ such that for every $\|\delta\|\leq \delta_0$, we have:
\begin{equation}
    f_t(z_0+\delta) = f_t(z_0) + \langle \nabla_x f_t(z_0),\delta \rangle.
\end{equation}
\end{property}

Since $f_t(z_0+\delta)\in \Level_t$ translates into $f_t(z_0+\delta) = -\epsilon$,
\begin{equation}\label{eq:newton}
        \delta = -\frac{f_t(z_0)+\epsilon}{\|\nabla_x f_t(z_0)\|^2}\nabla_x f_t(z_0).
\end{equation}

Properties of $\hkr$ ensure that the optimal displacement follows the direction of the gradient $\nabla_x f_t(z_0)$, which coincides with the direction of an optimal transportation plan ~\cite{serrurier2021achieving}. The term $\|\nabla_x f_t(z_0)\|$ enjoys an interpretation as a Local Lipschitz Constant (see~\cite{jordan2020exactly}) of $f_t$ around $z_0$, which we know fulfills $\|\nabla_x f_t(z_0)\|\leq 1$ when parametrized with an 1-Lipschitz neural net. When $f_t$ is trained to perfection, the expression for $\delta$ simplifies to $\delta=-f_t(z_0)\nabla_x f_t(z_0)$ thanks to Property~\ref{prop:gnphkr}.  

\begin{property}[Minimizers of $\hkr$ are Gradient Norm Preserving (from~\cite{serrurier2021achieving})]\label{prop:gnphkr}
Let $f^{*}_t$ be the solution of Equation~\ref{eq:empiricalminimizer}. Then for almost every $z\in B$ we have $\|\nabla_x f^*_t(z)\|=1$.  
\end{property}

In practice, the exact minimizer $f^{*}_t$ is not always retrieved, but equation~\ref{eq:newton} still applies to imperfectly fitted classifiers. The final sample $z'\sim Q_t$ is obtained by generating a sequence of $T$ small steps to smooth the generation. The procedure is summarized in algorithm~\ref{alg:newtonraphson}. In practice, $T$ can be chosen very low (below $16$) without significantly hurting the quality of generated samples. Finally, we pick a random ``learning rate'' $\eta\sim\Uniform([0,1])$ for each negative example in the batch to ensure they distribute evenly on the path toward the boundary. The procedure also benefits from Property \ref{thm:fixpointstop_main}, which ensures that the distribution $Q_{t+1}$ obtained from $Q_t$ across several iterative applications of Algorithm \ref{alg:newtonraphson} still fulfils $Q^{t+1}\bedd\Prob_X$. The proof is given in Appendix \ref{app:sdfadv}.

\begin{property}[Complementary distributions are fix points]\label{thm:fixpointstop_main}
Let $Q^{t}$ be such that $Q^{t}\bedd\Prob_X$. Assume that $Q^{t+1}$ is obtained with algorithm~\ref{alg:mmalgo}. Then we have $Q^{t+1}\bedd\Prob_X$. 
\end{property}

\begin{algorithm}
\caption{Adapted Newton–Raphson for Complementary Distribution Generation}
\label{alg:newtonraphson}
\textbf{Input}: 1-Lipschitz neural net $f_t$\\
\textbf{Parameter}: number of steps $T$\\
\textbf{Output}: sample $z'\sim Q_t(f)$
\begin{algorithmic}[1] 
\STATE sample learning rate $\eta\sim\Uniform([0,1])$
\STATE $z_0\sim\Uniform(B)$  \COMMENT{ Initial approximation.}
\FOR{each step $t=1$ to $T$}
\STATE $z_{t+1}\gets z_t-\frac{\eta}{T}\frac{\nabla_x f(z^t)}{\|\nabla_x f(z^t)\|_2^2}(f(z_t)+\epsilon)$\COMMENT{Refining.}
\STATE $z_{t+1}\gets \Pi_B(z_{t+1})$\COMMENT{ Stay in feasible set.}
\ENDFOR
\STATE \textbf{return} $z_T$
\end{algorithmic}
\end{algorithm}

\begin{remark}\label{remark:stochasticity}
In high dimension $d\gg 1$, when $\|\nabla_x f_t(z)\|=1$ and Vol$(B)\gg \text{Vol}(\support)$ the samples obtained with algorithm~\ref{alg:newtonraphson} are approximately uniformly distributed in the levels of $f_t$. It implies that the density of $Q$ increases exponentially fast (with factor $d$) with respect to the value of $-|f_t(\cdot)|$. This mitigates the adverse effects of the curse of dimensionality.   
\end{remark}
  
This scheme of ``generating samples by following gradient in input space'' reminds diffusion models~\cite{ho2020denoising}, feature visualization tools~\cite{olah2017feature}, or recent advances in VAE~\cite{kuzina2022alleviating}. However, no elaborated scheme is required for the training of $f_t$: 1-Lipschitz networks exhibit smooth and interpretable gradients~\cite{serrurier2022adversarial} which allows sampling from $\support$ ``for free'' as illustrated in figure~\ref{fig:mnist_gen_main}.  

\begin{remark}\label{remark:metropolis}
A more precise characterization of $Q_t$ built with algorithm~\ref{alg:newtonraphson} can be sketched below. Our approach bares some similarities with the spirit of Metropolis-adjusted Langevin algorithm~\cite{grenander1994representations}. In this method, the samples of $p(x)$ are generated by taking the stationary distribution $x_{t\veryshortarrow\infty}$ of a continuous Markov chain obtained from the stochastic gradient step iterates 
\vspace{-0.2cm}
\begin{equation}
\vspace{-0.2cm}
x_{t+1}\gets x_t+\zeta\nabla_x \log{p(x)}+\sqrt{2\zeta}Z
\end{equation}
for some distribution $p(x)$ and $Z\sim\mathcal{N}(\bf 0,\bf 1)$. By choosing the level set $\epsilon=0$, and $p(x)\propto\indicator\{f(x)\leq 0\}\exp{(-\eta f^2(x))}$ the score function $\zeta\nabla_x \log{p(x)}$ is transformed into $\nabla_x f(x)|f(x)|$ with $\zeta=\frac{\eta}{T}$. Therefore, we see that the density decreases exponentially faster with the squared distance to the boundary $\partial\support$ when there are enough steps $T\gg 1$. In particular when $\support = \{0\}$ we recover $p(x)$ as the pdf of a standard Gaussian $\mathcal{N}(\bf 0,\bf 1)$. Although the similarity is not exact (e.g., the diffusion term $\sqrt{2\eta}Z$ is lacking, $T$ is low, $\eta\sim\mathcal{U}([0,1])$ is a r. v.), it provides interesting complementary insights on the algorithm.  
\end{remark}

\subsection{Alternating minimization for SDF learning}

Each classifier $f_t$ does not need to be trained from scratch. Instead, the same architecture is kept throughout training, and the algorithm produces a sequence of parameters $\theta_t$ such that $f_t=f_{\theta_t}$. Each set of parameters $\theta_t$ is used as initialization for the next one $\theta_{t+1}$. Moreover, we only perform a low fixed number of parameter updates for each $t$ in a GAN fashion. The final procedure of OCSDF is shown in Figure \ref{fig:main} and detailed in algorithm~\ref{alg:mmalgolazy} of Appendix \ref{app:sdfadv}.

\section{Properties}

\subsection{Certificates against adversarial attacks}  

The most prominent advantage of 1-Lipschitz neural nets is their ability to produce certificates against adversarial attacks~\cite{szegedy2013}. Indeed, by definition we have $f(x+\delta)\in[f(x)-\|\delta\|, f(x)+\|\delta\|]$ for every example $x\in\support$ and every adversarial attack $\delta\in\Reals^d$. This allows bounding the changes in AUROC score of the classifier for every possible radius $\epsilon>0$ of adversarial attacks.

\begin{proposition}[certifiable AUROC]\label{certifauroc}
 Let $F_0$ be the cumulative distribution function associated with the negative classifier's prediction (when $f(x) \leq 0$), and $p_1$ the probability density function of the positive classifier's prediction (when $f(x) > 0$). Then, for any attack of radius $\epsilon > 0$, the AUROC of the attacked classifier $f_{\epsilon}$ can be bounded by
 
\begin{equation}
    \text{AUROC}_{\epsilon}(f) = \int_{-\infty}^{\infty} F_0(t)p_1(t-2\epsilon)dt.
\end{equation}
\end{proposition}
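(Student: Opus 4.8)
The plan is to reduce the statement to the Mann--Whitney (threshold-free) characterization of the AUROC and then exploit the one-sided pointwise control that $1$-Lipschitzness gives on the perturbed scores. Write $S^{+}=f(X^{+})$ for a test draw $X^{+}$ from the positive (in-distribution) law and $S^{-}=f(X^{-})$ for a negative test draw, with $X^{+}$ and $X^{-}$ independent; by hypothesis $S^{+}$ has density $p_1$ and $S^{-}$ has c.d.f.\ $F_0$. First I would record the identity $\mathrm{AUROC}(f)=\Pr[S^{+}>S^{-}]$ and recover the integral form by conditioning on the value of $S^{+}$ and using independence:
\begin{equation}
\mathrm{AUROC}(f)=\int_{-\infty}^{\infty}\Pr[S^{-}<t]\,p_1(t)\,dt=\int_{-\infty}^{\infty}F_0(t)\,p_1(t)\,dt,
\end{equation}
which is the $\epsilon=0$ case and pins down the conventions.

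For the key step, fix any admissible attack, i.e.\ a perturbation $\delta$ with $\|\delta\|_2\le\epsilon$ applied to each test point. Then $1$-Lipschitzness yields the pointwise bounds $f(X^{+}+\delta)\ge f(X^{+})-\epsilon$ and $f(X^{-}+\delta)\le f(X^{-})+\epsilon$. Because the adversary perturbs each point on its own, the attacked scores $f_{\epsilon}(X^{+})$ and $f_{\epsilon}(X^{-})$ remain independent, and on every sample path $\{\,S^{+}-\epsilon>S^{-}+\epsilon\,\}\subseteq\{\,f_{\epsilon}(X^{+})>f_{\epsilon}(X^{-})\,\}$. Hence, for \emph{every} such attack,
\begin{equation}
\mathrm{AUROC}(f_{\epsilon})=\Pr[f_{\epsilon}(X^{+})>f_{\epsilon}(X^{-})]\ \ge\ \Pr[S^{+}>S^{-}+2\epsilon].
\end{equation}
Conditioning once more on $S^{+}=t$ and using independence, $\Pr[S^{+}>S^{-}+2\epsilon]=\int_{-\infty}^{\infty}\Pr[S^{-}<t-2\epsilon]\,p_1(t)\,dt=\int_{-\infty}^{\infty}F_0(t-2\epsilon)\,p_1(t)\,dt$; the $2\epsilon$ shift can equivalently be carried on the other factor by the substitution $t\mapsto t+2\epsilon$, giving the closed form in the statement. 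Defining $\mathrm{AUROC}_{\epsilon}(f)$ to be this guaranteed value then delivers both a genuine certificate --- it lower-bounds the AUROC under any $\epsilon$-bounded perturbation of the test set --- and the stated formula.

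Rather than one deep obstacle, a few points need care. The substantive content is that the worst case is exactly the uniform $\pm\epsilon$ shift, which I would isolate as a short lemma: the AUROC is a functional of the two marginal score laws only (the two test draws being independent and each per-point attack depending only on its own point) and is monotone under first-order stochastic dominance of each marginal, while $f_{\epsilon}(X^{+})$ stochastically dominates $S^{+}-\epsilon$ and $S^{-}+\epsilon$ dominates $f_{\epsilon}(X^{-})$; the event-inclusion step above is just the pathwise incarnation of this. Second, I would keep the $\inf/\sup$ over $\delta$ inside the probability so as to avoid selecting a measurable optimal attack $x\mapsto\delta^{\star}(x)$. Third, the usual atomlessness caveat ($p_1$ exists, $F_0$ continuous) makes ties probability zero, so "$>$'' versus "$\ge$'' and the precise treatment of the level set $\{f=0\}$ are immaterial.
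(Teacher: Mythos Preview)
Your proposal is correct and essentially the same as the paper's proof: both exploit the $1$-Lipschitz bound to shift each score distribution by $\epsilon$, then absorb the two shifts into a single $2\epsilon$ translation in the integral. The only cosmetic difference is that you invoke the Mann--Whitney identity $\mathrm{AUROC}(f)=\Pr[S^{+}>S^{-}]$ directly, whereas the paper starts from the ROC-curve integral $\int_0^1 F_{-1}(F_1^{-1}(v))\,dv$ and changes variables to reach the same integrand; your treatment of measurability and ties is a bit more careful than the paper's, but the argument is the same.
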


The proof is left in Appendix \ref{app:certifauroc}. The certified AUROC score can be computed analytically without performing the attacks empirically, solely from score predictions $p_1(t-2\epsilon)$. More importantly, the certificates hold against \textit{any} adversarial attack whose $l_2$-norm is bounded by $\epsilon$, regardless of the algorithm used to perform such attacks. We emphasize that producing certificates is more challenging than traditional defence mechanisms (e.g, adversarial training, see~\cite{ijcai2021p591} and references therein) since they do not target defence against a specific attack method. Note that MILP solvers and branch-and-bound {approaches~\cite{tjengevaluating2019,wang2021beta}} can be used to improve the tightness of certificates, but at a higher cost.

\subsection{Rank normal and anomalous examples}
  
Beyond the raw certifiable AUROC score, $l_2$-based Lipschitz robustness enjoys another desirable property: the farther from the boundary the adversary is, the more important the attack budget required to change the score. In practice, it means that an attacker can only dissimulate anomalies already close to being ``normal''. Aberrant and hugely anomalous examples will require a larger (possibly impracticable) attack budget. This ensures that the \textit{normality score} is actually meaningful: it is high when the point is central in its local cloud, and low when it is far away.

\section{Experiments}

In this section, we evaluate the performances and properties of OCSDF on tabular and image data. All the experiments are conducted with tensorflow, and the 1-Lipschitz neural nets are implemented using the library Deel-Lip\footnote{\url{https://github.com/deel-ai/deel-lip}}. Our code is available at~\url{https://github.com/Algue-Rythme/OneClassMetricLearning}.
\subsection{Toy examples from Scikit-Learn}

We use two-dimensional toy examples from the Scikit-Learn library~\cite{pedregosa2011scikit}. Results are shown in figure~\ref{fig:toy2d}. The contour of the decision function are plotted in resolution $300\times 300$ pixels. The level sets of the classifier are compared against those of One Class SVM~\cite{scholkopf2001estimating} and Isolation Forest~\cite{liu2008isolation}. We also train a conventional network with Binary Cross Entropy against complementary distribution $Q_t$, and we show it struggles to learn a meaningful decision boundary. Moreover, its Local Lipschitz Constant~\cite{jordan2020exactly} increases uncontrollably, as shown in table~\ref{tab:llc_main}, which makes it prone to adversarial attacks. Finally, there is no natural interpretation of the prediction of the conventional network in terms of distance: the magnitude $|f(\cdot)|$ of the predictions quickly grows above $10^3$, whereas for 1-Lipschitz neural nets, it is approximately equal to the signed distance function $\Sdf$. We refer to Appendix \ref{app:toy2d} for visualizations.

\begin{figure}[!ht]
    \centering
    
    \begin{subfigure}{0.02\linewidth}
    \rotatebox{90}{\small \textbf{OCSDF}} 
    \end{subfigure}
    \begin{subfigure}{0.28\linewidth}
        \centering
        \includegraphics[width=1.\linewidth]{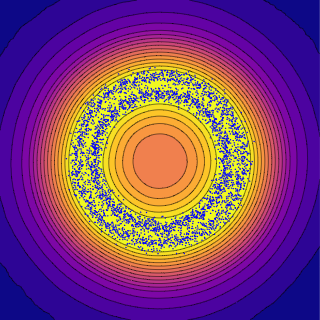}
    \end{subfigure}
    \begin{subfigure}{0.28\linewidth}
        \centering
        \includegraphics[width=1.\linewidth]{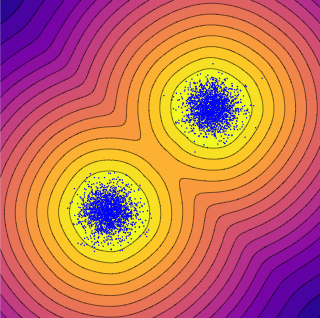}
    \end{subfigure}
    \begin{subfigure}{0.28\linewidth}
        \centering
        \includegraphics[width=1.\linewidth]{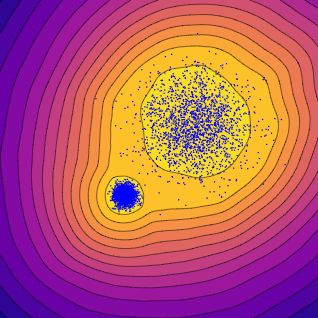}
    \end{subfigure}\\
        \begin{subfigure}{0.02\linewidth}
    \rotatebox{90}{\small \textbf{OCSVM}} 
    \end{subfigure}
    \begin{subfigure}{0.28\linewidth}
        \centering
        \includegraphics[width=1.\linewidth]{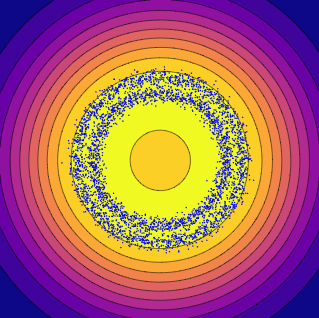}
        \subcaption{Two circles.}
    \end{subfigure}
    \begin{subfigure}{0.28\linewidth}
        \centering
        \includegraphics[width=1.\linewidth]{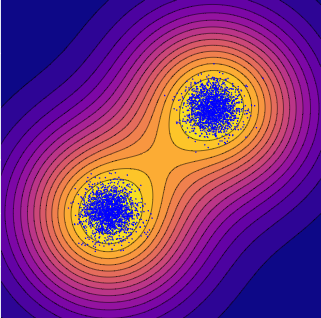}
        \subcaption{Two blobs.}
    \end{subfigure}
    \begin{subfigure}{0.28\linewidth}
        \centering
        \includegraphics[width=1.\linewidth]{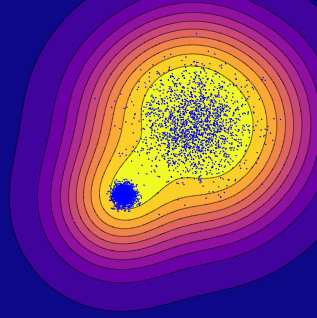}
        \subcaption{Blob \& cloud.}
    \end{subfigure}
    \caption{Contour plots of our method with 1-Lipschitz (LIP) network and $\hkr$ (HKR) loss on toy examples of Scikit-learn.}\label{fig:toy2d}
\end{figure}

\subsection{Anomaly Detection on Tabular datasets}

We tested our algorithm on some of the most prominent anomaly detection benchmarks of ODDS library~\cite{Rayana:2016}. In this unsupervised setting (like ADBench~\cite{han_adbench_nodate}) all the examples (normal examples and anomalies) are seen during training, but their true label is unknown. To apply our method, the only hyperparameter needed is the margin $m$ that we select in the range $[0.01, 0.05, 0.2, 1.]$. For each value, the results are averaged over $20$ independent runs train/test splits. Following ADBench guidelines and best practices from the AD community, we only compute the AUROC, since this metric is symmetric under label flip. We report the best average in table~\ref{tab:tabularrocad} along baselines from ADBench~\cite{han_adbench_nodate}. As observed by~\cite{han_adbench_nodate}, none of the algorithms clearly dominates the others, because what is considered an anomaly (or not) depends on the context. Among 14 other methods tested, our algorithm ranks $7.1 \pm 3.6 / 15$, while the best (Isolation Forests) ranks $4.5 \pm 3.2 / 15$. The experiment shows that our algorithm is competitive with respect to other broadly used baselines. Nonetheless, it brings several additional advantages. First, our algorithm can be seen as a parametric version of kNN for the euclidean distance, which leverages deep learning to avoid the costly construction of structures like a KDTree~\cite{maneewongvatana1999analysis} and the quadratic cost of nearest neighbor search, thereby enabling its application in high dimensions. Second, it provides robustness certificates. We illustrate those two advantages more thoroughly in the next section.

\begin{table}[!ht]
    \small
    \centering
    \resizebox{\linewidth}{!}{\begin{tabular}{|c|ccccc|}
        \hline
       \multirow{2}{*}{\makecell{Local Lipschitz\\ Constant}} & \makecell{One\\ Cloud}  & \makecell{Two\\ Clouds} & \makecell{Two\\ Blobs} & \makecell{Blob\\ Cloud} & \makecell{Two\\ Moons} \\
       \cline{2-6}
       &   26.66    &    122.84   &  1421.41  &      53.90     &   258.73  \\ 
       \hline
    \end{tabular}}
    \caption{Lower bound on the Local Lipschitz Constant (LLC) of conventional network after $10,000$ training steps for each toy example. It is the maximum of $\|\nabla_{x_i}f(x_i)\|$ over the train set.}
    \label{tab:llc_main}
\end{table}
\begin{table*}
    \centering
    \small
    \resizebox{\textwidth}{!}{\begin{tabular}{cccc|ccccccc}
    \hline
      Dataset & $\bm{d}$ &\#no.+an. & perc. & \makecell{OCSDF\\ (Ours)} & \makecell{Deep\\ SVDD} & \makecell{OC\\ SVM} & IF & PCA & kNN & SOTA\\
    \hline
    \hline
    \multicolumn{4}{c}{Average Rank} & $7.1\pm 3.6$ & $11.2$ & $7.5$ & $4.5$ &  $5.7$ & $7.8$ & $4.5\pm 3.2$ (IF)\\
    \hline  
    breastw & 9 & 444+239 & $35\%$ & $(\#10)\Hquad 82.6\pm 5.9$ & $65.7$ & $80.3$ & $98.3$ & $95.1$ & $97.0$ & $99.7$ (COPOD) \\ 
    cardio & 21 & 1,655+176 & $9.6\%$ & $(\#2)\Hquad 95.0\pm 0.1$ & $59.0$ & $93.9$ & $93.2$ & $95.5$ & $76.6$ & $95.5$ (PCA)\\
    glass & 9 & 205+9 & $4.2\%$ & $(\#7)\Hquad 73.9\pm 4.1$ & $47.5$ & $35.4$ & $77.1$ & $66.3$ & $82.3$ & $82.9$ (CBLOF)\\
    http (KDDCup99) & 3 & 565,287+2,211 & $0.4\%$ & $(\#11)\Hquad 67.5\pm 37$ & $69.0$ & $99.6$ & $99.96$ & $99.7$ & $03.4$ & $99.96$ (IF) \\
    Ionosphere & 33 & 225+126 & $36\%$ & $(\#7)\Hquad 80.2\pm 0.1$ & $50.9$ & $75.9$ & $84.5$ & $79.2$ & $88.3$ & $90.7$ (CBLOF)\\
    Lymphography & 18 & 142+6 & $4.1\%$ & $(\#8)\Hquad 96.1\pm 4.9$ & $32.3$ & $99.5$ & $99.8$ & $99.8$ & $55.9$ & $99.8$ (CBLOF)\\ 
    mammography & 6 & 10,923+260 & $2.32\%$ & $(\#6)\Hquad 86.0\pm 2.5$ & $57.0$ & $84.9$ & $86.4$ & $88.7$ & $84.5$ & $90.7$ (ECOD)\\
    musk & 166 & 2,965+97 & $3.2\%$ & $(\#8)\Hquad 92.6\pm 20.$ & $43.4$ & $80.6$ & $99.99$ & $100.0$ & $69.9$ & $100.0$ (PCA) \\
    Optdigits & 64 & 5,066+150 & $3\%$ & $(\#12)\Hquad 51.0\pm 0.9$ & $38.9$ & $54.0$ & $70.9$ & $51.7$ & $41.7$ & $87.5$ (CBLOF) \\ 
    Pima & 8 & 500+268 & $35\%$ & $(\#12)\Hquad 60.7\pm 1.0$ & $51.0$ & $66.9$ & $72.9$ & $70.8$ & $73.4$ & $73.4$ (kNN) \\
    satimage-2 & 36 & 5,732+71 & $1.2\%$ & $(\#3)\Hquad 97.9\pm 0.4$ & $53.1$ & $97.3$& $99.2$ & $97.6$ & $92.6$ & $99.8$ (CBLOF)\\
    Shuttle & 9 & 45,586+3,511 & $7\%$ & $(\#4)\Hquad 99.1\pm 0.3$ & $52.1$ & $97.4$ & $99.6$ & $98.6$ & $69.6$ & $99.6$ (IF) \\ 
    smtp (KDDCup99) & 3 & 95,126+30 & $0.03\%$ & $(\#4)\Hquad 87.1\pm 3.5$ & $78.2$ & $80.7$ & $89.7$ & $88.4$ & $89.6$ & $89.7$ (IF) \\
    speech & 400 & 3,625+61 & $1.65\%$ & $(\#15)\Hquad 46.0\pm 0.2$ & $53.4$ & $50.2$ & $50.7$ & $50.8$ & $51.0$ & $56.0$ (COF)\\
    thyroid & 6 & 3,679+93 & $2.5\%$ & $(\#5)\Hquad 95.9\pm 0.0$ & $49.6$ & $87.9$ & $98.3$ & $96.3$ & $95.9$ & $98.3$ (IF)\\
    vertebral & 6 & 210+30 & $12.5\%$ &$(\#4)\Hquad 48.6\pm 2.6$ & $36.7$ & $38.0$ & $36.7$ & $37.0$ & $33.8$ & $53.2$ (DAGMM)\\
    vowels & 12 & 1,406+50 & $3.4\%$ & $(\#2)\Hquad 94.7\pm 0.7$ & $52.5$ & $61.6$ & $73.9$ & $65.3$ & $97.3$ & $97.3$ (kNN)\\
    WBC & 30 & 357+21 & $5.6\%$ & $(\#10)\Hquad 93.6\pm 0.1$ & $55.5$ & $99.0$ & $99.0$ & $98.2$ & $90.6$ & $99.5$ (CBLOF) \\ 
    Wine & 13 & 119+10 & $7.7\%$ & $(\#5)\Hquad 81.5\pm 0.9$ & $59.5$ & $73.1$ & $80.4$ & $84.4$ & $45.0$ & $91.4$ (HBOS) \\ 
    \end{tabular}}
    \caption{AUROC score for tabular data, averaged over $20$ runs. The dimension of the dataset is denoted by $\bm{d}$. In the \textbf{Anomaly Detection protocol (AD)} we use all the data (normal class and anomalies) for training, in an unsupervised fashion. The ``\#no.+an.'' column indicates part of normal (no.) and anomalous (an.) data used during training for each protocol. SOTA denominates the best score ever reported on the dataset, obtained by crawling relevant literature, or ADBench~\cite{han_adbench_nodate} results (table D4 page 37). We report the rank as (\#rank) among 14 other methods.\\
    }
    \label{tab:tabularrocad}
\end{table*}

\begin{table*}
    \centering
    \small
    \resizebox{\textwidth}{!}{\begin{tabular}{c|ccccc|ccc}
    \hline
      MNIST & \makecell{OCSDF} & OCSDF & OCSDF & OCSDF & OCSDF & \makecell{OC\\ SVM} & \makecell{Deep\\ SVDD} & IF \\
      Certificates & $\epsilon=0$ & $\epsilon=8/255$ & $\epsilon=16/255$ & $\epsilon=36/255$ & $\epsilon=72/255$ & $\epsilon=0$ & $\epsilon=0$ & $\epsilon=0$\\
    \hline
    mAUROC &  $\bf 95.5\pm 0.4$ & $93.2\pm 2.1$ & $89.9\pm 3.5$ & $78.4\pm 6.4$ & $57.5\pm 7.5$ & $91.3\pm 0.0$ & $\bf 94.8\pm 0.9$ & $92.3\pm 0.5$\\
    \hline
    digit 0 & $\bf 99.7\pm 0.1$ & $99.6\pm 0.2$ & $99.5\pm 0.2$ & $99.0\pm 0.6$ & $96.2\pm 3.0$ & $98.6\pm 0.0$ & $98.0\pm 0.7$ & $98.0\pm 0.3$\\
    digit 1 & $\bf 99.8\pm 0.0$ & $99.7\pm 0.0$ & $99.6\pm 0.1$ & $99.2\pm 0.3$ & $96.2\pm 1.6$ & $99.5\pm 0.0$ & $\bf 99.7\pm 0.1$ & $97.3\pm 0.4$\\
    digit 2 & $\bf 90.6\pm 2.0$ & $85.3\pm 1.9$ & $78.2\pm 2.3$ & $53.1\pm 5.2$ & $14.1\pm 4.6$ & $82.5\pm 0.1$ & $\bf 91.7\pm 0.1$ & $88.6\pm 0.5$\\
    digit 3 & $\bf 93.4\pm 1.2$ & $90.0\pm 1.7$ & $85.0\pm 2.3$ & $66.2\pm 4.6$ & $26.9\pm 5.0$ & $88.1\pm 0.0$ & $91.9\pm 1.5$ & $89.9\pm 0.4$\\
    digit 4 & $\bf 96.5\pm 0.9$ & $95.3\pm 1.2$ & $93.9\pm 1.7$ & $89.4\pm 3.6$ & $76.2\pm 9.8$ & $94.9\pm 0.0$ & $\bf 94.9\pm 0.8$ & $92.7\pm 0.6$\\
    digit 5 & $\bf 93.9\pm 2.2$ & $89.0\pm 3.2$ & $81.6\pm 4.7$ & $54.0\pm 8.7$ & $15.6\pm 6.9$ & $77.1\pm 0.0$ & $88.5\pm 0.9$ & $85.5\pm 0.8$\\
    digit 6 & $\bf 98.7\pm 0.6$ & $98.1\pm 0.7$ & $97.2\pm 0.9$ & $93.1\pm 2.6$ & $74.9\pm 10.4$ & $96.5\pm 0.0$ & $\bf 98.3\pm 0.5$ & $95.6\pm 0.3$\\
    digit 7 & $\bf 97.1\pm 0.6$ & $96.5\pm 0.5$ & $95.6\pm 0.6$ & $92.2\pm 0.8$ & $81.2\pm 1.7$ & $93.7\pm 0.0$ & $94.6\pm 0.9$ & $92.0\pm 0.4$\\
    digit 8 & $89.4\pm 2.6$ & $83.3\pm 5.1$ & $74.7\pm 9.0$ & $50.3\pm 15.9$ & $24.4\pm 14.0$ & $88.9\pm 0.0$ & $\bf 93.9\pm 1.6$ & $89.9\pm 0.4$\\
    digit 9 & $\bf 96.4\pm 0.3$ & $95.3\pm 0.9$ & $93.8\pm 1.3$ & $87.8\pm 3.1$ & $68.9\pm 7.6$ & $93.1\pm 0.0$ & $\bf 96.5\pm 0.3$ & $93.5\pm 0.3$\\
    \end{tabular}}
    \begin{tabular}{c|ccccc|ccc}
    \hline
      CIFAR10 & \makecell{OCSDF} & OCSDF & OCSDF & OCSDF & OCSDF & \makecell{OC\\ SVM} & \makecell{Deep\\ SVDD} & IF \\
      Certificates & $\epsilon=0$ & $\epsilon=8/255$ & $\epsilon=16/255$ & $\epsilon=36/255$ & $\epsilon=72/255$ & $\epsilon=0$ & $\epsilon=0$ & $\epsilon=0$\\
    \hline
    mAUROC & $57.4\pm 2.1$ & $53.1\pm 2.1$ & $48.8\pm 2.1$ & $38.4\pm 1.9$ & $22.5\pm 1.4$ & $64.8\pm 8.0$ & $64.8\pm 6.8$ & $55.4\pm 8.0$\\
    \hline
    Airplane & $\bf 68.2\pm 4.5$ & $64.3\pm 3.9$ & $60.1\pm 3.2$ & $49.4\pm 1.1$ & $31.2\pm 3.6$ & $61.6\pm 0.9$ & $61.7\pm 4.1$ & $60.1\pm 0.7$\\
    Automobile & $57.3\pm 1.7$ & $52.5\pm 3.0$ & $47.6\pm 4.2$ & $36.1\pm 6.8$ & $19.8\pm 7.8$ & $\bf 63.8\pm 0.6$ & $\bf 65.9\pm 2.1$ & $50.8\pm 0.6$\\
    Bird & $\bf 51.8\pm 2.7$ & $47.5\pm 1.8$ & $43.2\pm 1.6$ & $33.4\pm 3.4$ & $19.5\pm 5.7$ & $\bf 50.0\pm 0.5$ & $\bf 50.8\pm 0.8$ & $\bf 49.2\pm 0.4$\\
    Cat & $\bf 58.8\pm 1.2$ & $54.6\pm 0.8$ & $50.3\pm 0.8$ & $40.0\pm 1.5$ & $24.4\pm 2.3$ & $55.9\pm 1.3$ & $\bf 59.1\pm 1.4$ & $55.1\pm 0.4$\\
    Deer & $49.4\pm 2.4$ & $45.3\pm 2.1$ & $41.4\pm 1.9$ & $32.2\pm 1.5$ & $18.8\pm 1.4$ & $\bf 66.0\pm 0.7$ & $60.9\pm 1.1$ & $49.8\pm 0.4$\\
    Dog & $56.3\pm 0.6$ & $51.9\pm 1.0$ & $47.5\pm 1.6$ & $36.7\pm 2.9$ & $20.6\pm 4.0$ & $62.4\pm 0.8$ & $\bf 65.7\pm 2.5$ & $58.4\pm 0.5$\\
    Frog & $52.6\pm 1.8$ & $48.7\pm 1.7$ & $44.9\pm 1.6$ & $35.8\pm 1.4$ & $22.4\pm 1.1$ & $\bf 74.7\pm 0.3$ & $67.7\pm 2.6$ & $42.9\pm 0.6$\\
    Horse & $49.5\pm 0.9$ & $45.5\pm 1.0$ & $41.5\pm 1.2$ & $32.5\pm 1.5$ & $18.8\pm 1.6$ & $62.6\pm 0.6$ & $\bf 67.3\pm 0.9$ & $55.1\pm 0.7$\\
    Ship & $68.6\pm 1.8$ & $64.6\pm 1.4$ & $60.4\pm 1.3$ & $49.3\pm 2.4$ & $29.8\pm 4.9$ & $\bf 74.9\pm 0.4$ & $\bf 75.9\pm 1.2$ & $\bf 74.2\pm 0.6$\\
    Truck & $61.3\pm 3.4$ & $56.5\pm 2.1$ & $51.5\pm 1.1$ & $39.0\pm 3.9$ & $20.0\pm 7.0$ & $\bf 75.9\pm 0.3$ & $73.1\pm 1.2$ & $58.9\pm 0.7$\\
    \end{tabular}
    \caption{AUROC score on the test set of MNIST and CIFAR10 in a \textit{one versus all} fashion, averaged on $10$ runs. We also report the AUROC of DeepSVDD~\cite{ruff2018deep} for completeness, along with the other AUROC scores of Isolation Forest (IF) and One-Class SVM (OC-SVM) reported in~\cite{ruff2018deep}. When the differences between some
    methods are not statistically significant, we highlight both.  When the confidence intervals overlap, we highlight both. We also show the \textbf{certifiable} AUROC against l-$2$ attacks of norms $\epsilon\in\{8/255,16/255,36/255\}$. Concurrent methods cannot provide certificates for $\epsilon>0$.}
    \label{tab:mnistroc}
\end{table*}



\subsection{One Class learning on images}

We evaluate the performances of OCSDF for OCC, where only samples of the normal class are supposed to be available. To emulate this setting, we train a classifier on each of the classes of MNIST and Cifar10, and evaluate it on an independent test set in a \textit{one-versus-all} fashion. We compare our method against DeepSVDD \cite{ruff2018deep}, OCSVM \cite{scholkopf_support_1999}, and Isolation Forests \cite{liu2008isolation}. Details on the implementation of the baselines can be found in Appendix \ref{app:occimage}. The mean AUROC score is reported in table~\ref{tab:mnistroc} and averaged over $20$ runs. It is computed between the $1,000$ test examples of the target class and the remaining $9,000$ examples from other classes of the test set (both unseen during training). The histograms of normality scores are given in Appendix \ref{app:occimage}. OCSDF is competitive with respect to other baselines. In addition, it comes with several advantages described in the following.



\subsubsection{Certifiable and empirical robustness}

None of the concurrent methods can provide certificates against $l_2$ attacks: in the work of~\cite{goyal_drocc_2020} the attacks are performed empirically (no certificates) with $l_\infty$ radii. In table~\ref{tab:mnistroc}, we report our certifiable AUROC with various radii $\epsilon\in\{0, 8/25, 16/255, 36/255, 72/255\}$. In figure~\ref{fig:robust} we report the empirical AUROC against $l_2$-PGD attacks with three random restarts, using stepsize $\zeta=0.025\epsilon$ like Foolbox~\cite{rauber2017foolboxnative}. These results illustrate our method's benefits: not only does it come with robustness certificates that are verified empirically, but the empirical robustness is also way better than DeepSVDD, especially for Cifar10. Note that for 1-Lipschitz network trained with $\hkr$ loss, all the attacks tend to find the same adversaries~\cite{serrurier2021achieving} - hence PGD is also representative of the typical score that would have been obtained with other attack methods.

\begin{figure}[!ht]
    \centering
    \begin{subfigure}{0.22\textwidth}
        \centering
        \includegraphics[scale=0.2]{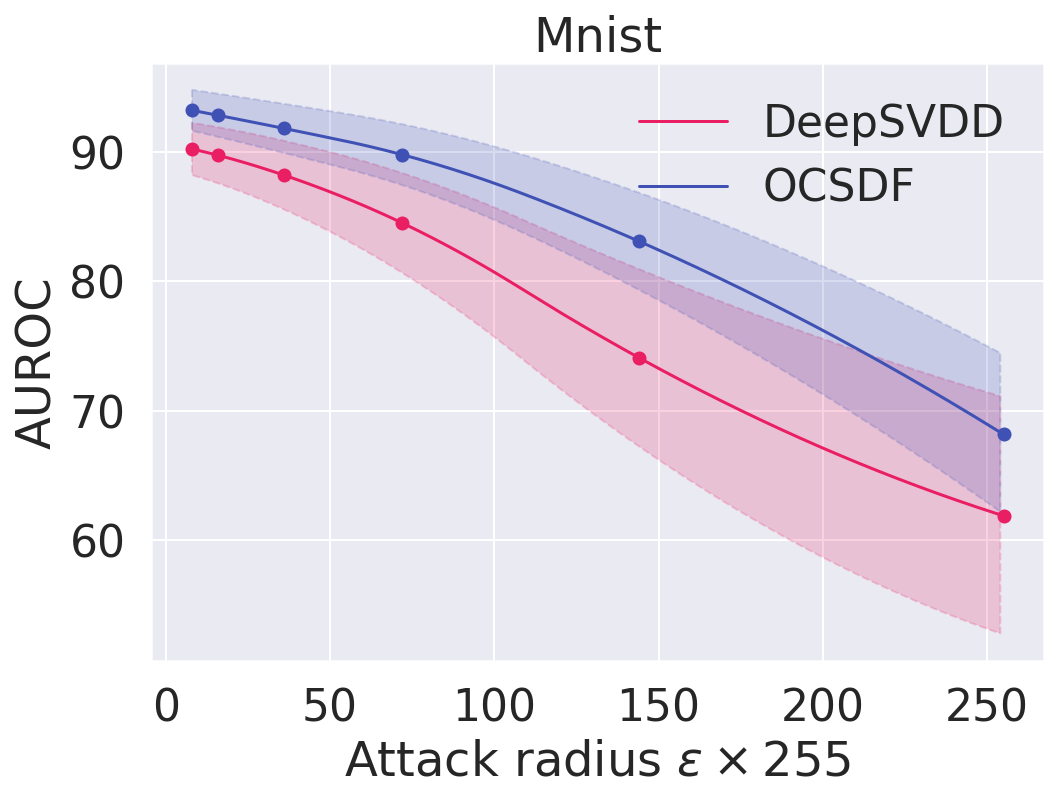}
    \end{subfigure}
    \begin{subfigure}{0.22\textwidth}
        \centering
        \includegraphics[scale=0.2]{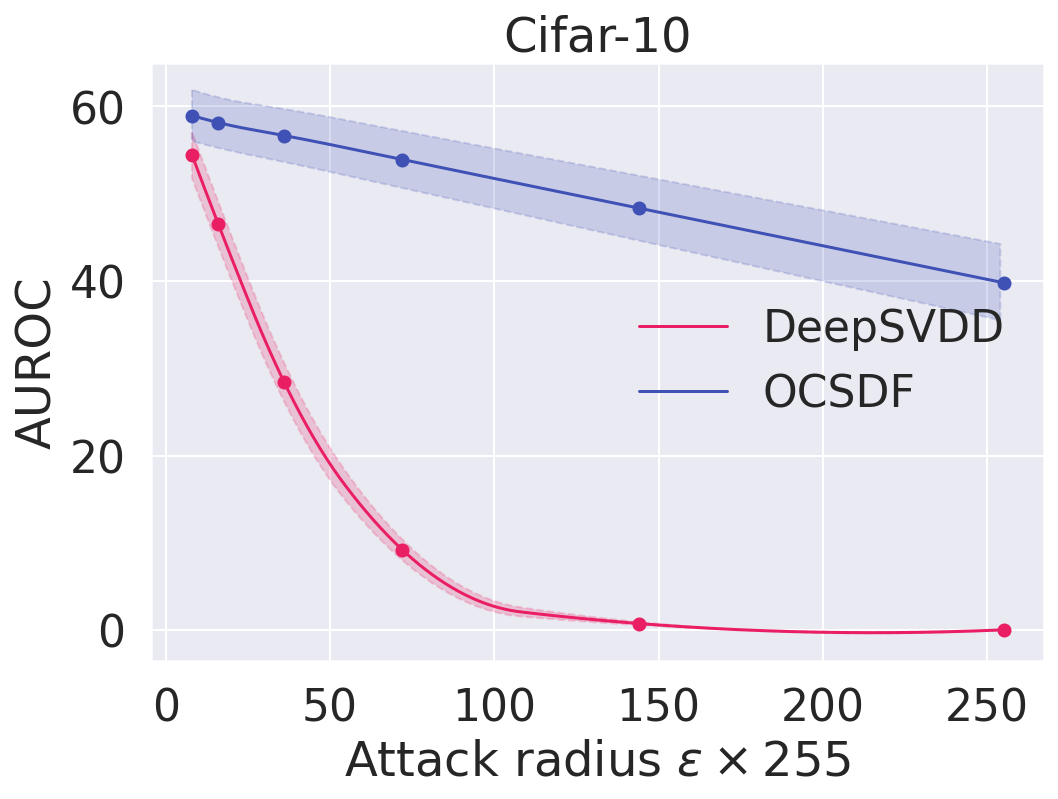}
    \end{subfigure}
    \caption{Empirical Mean AUROC on all classes against adversarial attacks of various radii in One Class setting, using default parameters of FoolBox~\cite{rauber2017foolboxnative}.}
    \label{fig:robust}
\end{figure}

\subsubsection{Visualization of the support}

OCSDF can be seen as a parametric version of kNN, which enables this approach in high dimensions. As a result, the decision boundary learned by the classifier can be materialized by generating adversarial examples with algorithm~\ref{alg:newtonraphson}. The forward computation graph is a classifier based on optimal transport, and the backward computation graph is an image generator. Indeed, the back-propagation through a convolution is a transposed convolution, a popular layer in the generator of GANs. Overall, the algorithm behaves like a WGAN~\cite{arjovsky2017wasserstein} with a single network fulfilling both roles. This unexpected feature opens a path to the explainability of the One Class classifier: the support learned can be visualized without complex feature visualization tools. In particular, it helps identify failure modes.
 
 \begin{figure}[!ht]
    \centering
    
    \begin{subfigure}{0.24\textwidth}
        \centering
        \includegraphics[width=1.\textwidth]{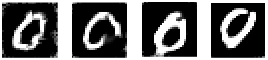}
    \end{subfigure}\begin{subfigure}{0.24\textwidth}
        \centering
        \includegraphics[width=1.\textwidth]{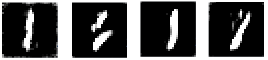}
    \end{subfigure}
    
    \begin{subfigure}{0.24\textwidth}
        \centering
        \includegraphics[width=1.\textwidth]{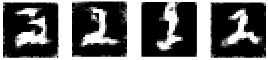}
    \end{subfigure}\begin{subfigure}{.24\textwidth}
        \centering
        \includegraphics[width=1.\textwidth]{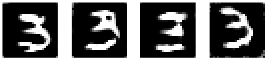}
    \end{subfigure}

    \caption{Examples from algorithm~\ref{alg:newtonraphson} with $T=64$ and $\eta=1$.}\label{fig:mnist_gen_main}
\end{figure}

\subsubsection{Limitations}

We also tested our algorithm on the Cats versus Dogs dataset by training on Cats as the One Class and using Dogs as OOD examples. On this high-dimensional dataset, the AUROC barely exceeds $55.0\%$. This suggests that the SDF is relevant for tabular and simple image datasets (e.g MNIST) but fails to generalize in a meaningful way in higher dimensions. The distance used to define the learned SDF is the euclidian distance. It is well known that such distance has its limitations in pixel space. Using different distances that are more relevant to pixel space is a perspective left for future works.  Our constructions rely on our ability to solve Eikonal equation $\|\nabla_x f\|=1$. This is easy to enforce on dense layers, but it is still an active research area for convolutions (see the recent works of~\cite{achour2021existence,singlaimproved2022,xulot2022}).  

In addition, scores reported in Table~\ref{tab:mnistroc} use the same hyper-parameters for all the classes of Cifar-10, while some concurrent approaches~\cite{goyal_drocc_2020} tune them on a per-class basis. We observed significant improvements with per class tuning of the margin (see figure~\ref{tab:cheatingprotocolcifar} in appendix), but we chose not to report them in Table~\ref{tab:mnistroc}, because this practice is criticized by recent contributions on which we are aligned~\cite{han_adbench_nodate,dinghyperparameter2022}.

\section{OCSDF for implicit shape parametrization}
Our approach to learning the SDF contrasts with the computer graphics literature, where SDF is used to obtain the distance of a point to a surface (here defined as $\partial\support$). Indeed, SDFs are usually learned in a supervised fashion, requiring the ground truth of $l_2$ distance. This is classically achieved using Nearest-Neighbor algorithms, which can be cumbersome, especially when the number of points is high. Efficient data structures (e.g., using K-dtrees~\cite{maneewongvatana1999analysis} or Octrees~\cite{meagher1980octree}) can mitigate this effect but do not scale well to high dimensions. Instead, OCSDF learns the SDF solely based on points contained in the support.  While neural network training is not cheap by any mean, the network approach is advantageous at inference time. Indeed, with a dataset of size $n$, a single forward in the network costs $\BigO(1)$ (furthermore orthogonalization of matrices can be done once for all), while kNN costs $\BigO(n^2)$ or $\BigO(n\log{n})$ (with trees).

\begin{figure}[!ht]
    \centering
    \includegraphics[width=1.\linewidth, trim=0 2.6cm 0 0, clip]{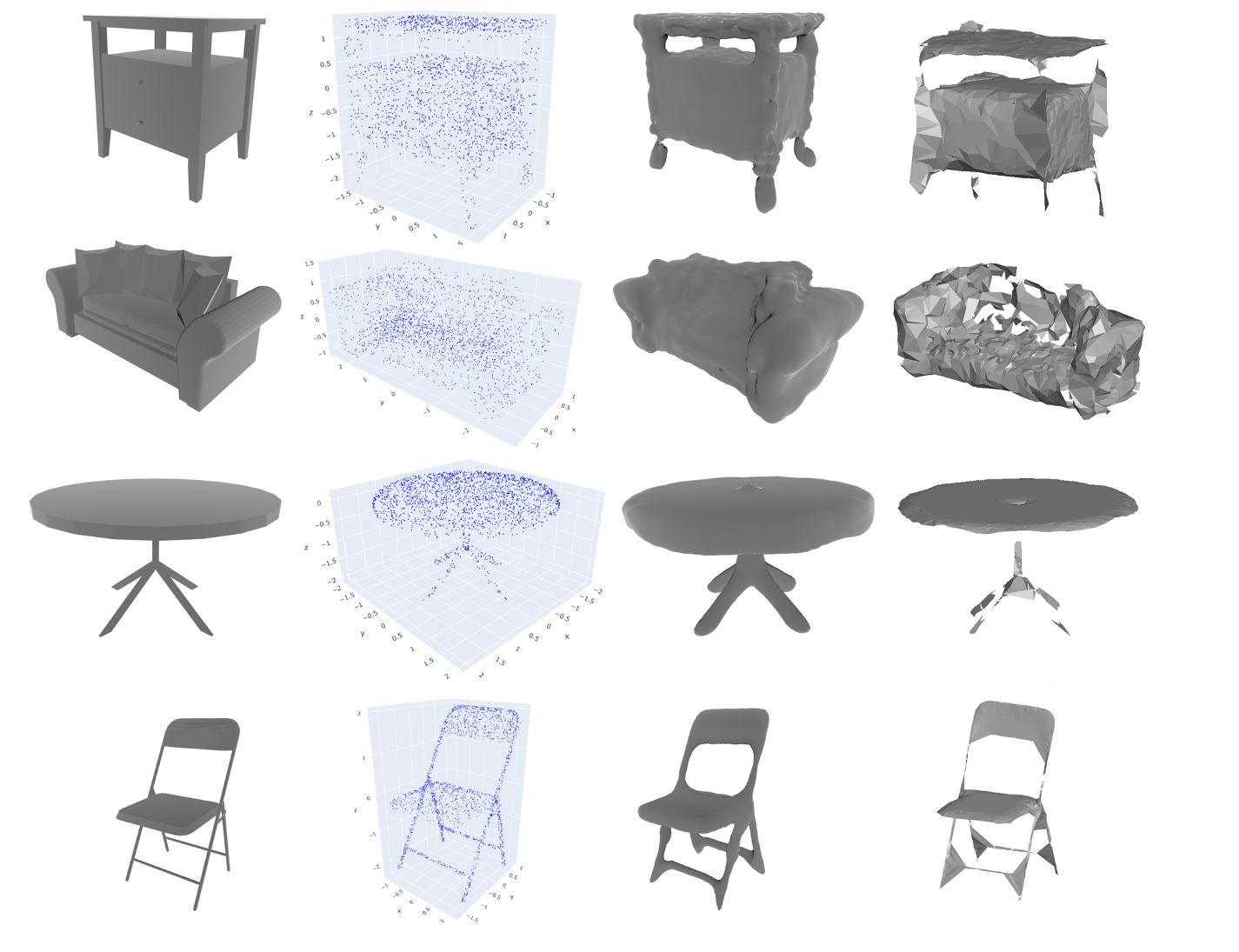}
    \caption{Visualization of the SDF (3rd column) from sparse point clouds of size $2048$ (2nd column) sampled from ground truth meshes (1st column) with \textit{Trimesh} library, against the SSSR algorithm~\cite{boltcheva2017surface} that attempts to reproduce the meshes solely from a point cloud (4th column). The SDF exhibits better extrapolation properties and provides smooth surfaces.}
    \label{fig:modelnet3d}
\end{figure}

To illustrate this, we use models from {Princeton's ModelNet10 dataset~\cite{wu20153d}}. We sample $n=2048$ within each mesh to obtain a 3d point cloud. We fit the SDF on the point cloud with the same hyperparameters as the tabular experiment. We use Lewiner marching algorithm~\cite{lewiner2003efficient} from scikit-image~\cite{van2014scikit}, on a $200\times 200\times 200$ voxelization of the input. We plot the mesh reconstructed with Trimesh~\cite{trimesh}. The results are highlighted in figure~\ref{fig:modelnet3d}. We chose the first percentile of $\Expect_{x\sim \Prob_X}[f(x)]$ as the level set of the iso-surface we plot. We compare our results visually against a baseline from~\cite{boltcheva2017surface} implemented in Graphite~\cite{graphite} that rebuilds the mesh solely from the point cloud (without extrapolation). We highlight that $n=2048$ is considered low resolution; hence many details are expected to be lost. Nonetheless, our method recovers the global aspect of the shape more faithfully, smoothly, and consistently than the other baseline. 



\section{Conclusion and future work}
We proposed a new approach to One-Class Classification, OCSDF, based on the parametrization of the Signed Distance Function to the boundary of the known distribution using a 1-Lipschitz neural net. OCSDF comes with robustness guarantees and naturally tackles the lack of negative data inherent to OCC. Finally, this new method extends OCC beyond Out-of-Distribution detection and allows applying it to surface parametrization from point clouds and generative visualization.  

OCSDF relies on the appropriate sampling of a complementary distribution. It can be improved by integrating proper priors (e.g steps in Fourrier space, Perlin noise prior or negative data augmentation), rather than random uniform, as long as Definition~\ref{def:beddinformal} is fulfilled. This perspective is very relevant and will be investigated in future works.

\section{Acknowledgements}
This work has benefited from the AI Interdisciplinary Institute ANITI, which is funded by the French ``Investing for the Future – PIA3'' program under the Grant agreement ANR-19-P3IA-0004. The authors gratefully acknowledge the support of the DEEL project.\footnote{\url{https://www.deel.ai/}}  

{\small
\bibliographystyle{plainnat}
\bibliography{biblio}
}

\newpage
\onecolumn
\appendix

\tableofcontents

\section{Proofs and comments}\label{app:pandc}

\subsection{Complementary distribution}

\begin{definition}[$\bedd$ Complementary Distribution]\label{def:bedd}
Let $\Prob_X$ a distribution with compact support $\support \subset B$, with $B\subset\Reals^d$ a bounded measurable set. $Q$ is said to be $(B,\epsilon)$ disjoint from $\Prob_X$ if (i) its support $\supp Q\subset B$ is compact (ii) $d(\supp Q, \support)\geq 2\epsilon$ (iii) for all measurable sets $M\subset B$ such that $d(M,\support)\geq 2\epsilon$ we have $Q(M)>0$.  
It defines a symmetric but irreflexive binary relation denoted $Q\bedd\Prob_X$.  
\end{definition}

The idea is to learn one class classifier by reformulating one class learning of $\Prob_X$ as a binary classification of $\Prob_X$ against a carefully chosen adversarial distribution $Q(\Prob_X)$. This simple idea had already occurred repeatedly in the related literature~\cite{sabokrou2018adversarially}. Note that $\hkr$ benefits from generalization guarantees as proved in~\cite{bethune2022pay}: the optimal classifier on the train set and on the test set are the same in the limit of big samples.  
 
\subsection{Learning the SDF with HKR loss}

\hkrsdf*
    \begin{proof}
    The results follow from the properties of $\hkr$ loss given in Proposition 2  of~\cite{serrurier2021achieving}. If $Q\bedd \Prob_X$, then by definition, the two datasets are $2\epsilon$ separated. Consequently the hinge part of the loss is null: $\max{(0,m-yf(x))}$ for all pairs $(x,+1)$ and $(z,-1)$ with $x\sim\Prob_X$ and $z\sim Q$. We deduce that:
    \begin{equation}\label{eq:hkrsdfeq1}
        \forall x\in\support, f(x)\geq m,\qquad \forall z\in\supp Q, f(z)\leq -m.
    \end{equation}
    In the following we use:
    $$F_z=\{\argmin_{z_0\in\supp Q}\|x-z_0\|_2\},\forall x\in\support$$
    and
    $$F_x=\{\argmin_{x_0\in\support}\|x_0-z\|_2\},\forall z\in(\supp Q).$$
    Since $m=\epsilon$, we must have $f(x)=m$ for all $x\in F_x$, and $f(z)=-m$ for all $z\in F_z$, whereas $\Sdf(x)=0$ and $\Sdf(z)=-2m$. Thanks to the 1-Lipschitz property for every $x\in\support$ we have $f(x)\leq f(\partial x)+\|x-\partial x\|$ where $\partial x=\argmin_{\bar x\in\partial\support}\|x-\bar x\|$ is the projection of $x$ onto the boundary $\partial\support$. Similarly $f(z)\geq f(\partial z)-\|z-\partial z\|$. The $-yf(x)$ term in the $\hkr$ loss (Wasserstein regularization), incentives to maximize the amplitude $|f(x)|$ so the inequalities are tight.  Notice that $\Sdf(x)=\Sdf(\partial x)+\|x-\partial x\|$ and $\Sdf(z)=\Sdf(\partial z)-\|z-\partial z\|$. This allows concluding:
    \begin{equation}
        \forall x\in \support,\Sdf(x))= f^{*}(x)-m,\qquad \forall z\in\supp Q,\Sdf(z)= f^{*}(z)-m.
    \end{equation}

    \end{proof}
  
\subsection{Finding the right complementary distribution}

Finding the right distribution $Q\bedd\Prob_X$ with small $\epsilon$ is also challenging.

We propose to seek $Q$ through an alternating optimization process. Consider a sample $z\in\Level_t$. If $z$ is a \textit{false positive} (i.e $f_t(z)>0$ and $z\notin\support$), training $f_{t+1}$ on the pair $(z,-1)$ will incentive $f_{t+1}$ to fulfill $f_{t+1}(z)<0$, which will reduce the volume of false positive associated to $f_{t+1}$. If $z$ is a \textit{true negative} (i.e $f_t(z)<0$ and $z\notin\support$) it already exhibits the wanted properties. The case of \textit{false negative} (i.e $f_t(z)<0$ and $z\in\support$) is more tricky: the density of $\Prob_X$ around $z$ will play an important role to ensure that $f_{t+1}(z)>0$.  
  
Hence we assume that samples from the target $\Prob_X$ are \textit{significantly} more frequent than the ones obtained from pure randomness. It is a very reasonable assumption (especially for images, for example), and most distributions from real use cases fall under this setting.   

\begin{assumption}[$\Prob_X$ samples are more frequent than pure randomness (informal)]
For any measurable set $M\subset\support$ we have $\Prob_X(M)\gg \Uniform(M)$.
\end{assumption}
  
To ensure that property (iii) of definition~\ref{def:bedd} is fulfilled, we also introduce stochasticity in algorithm~\ref{alg:newtonraphson} by picking a random ``learning rate'' $\eta\sim\Uniform([0,1])$. To decorrelate samples, the learning rate is sampled independently for each example in the batch.  

The final procedure depicted in algorithm~\ref{alg:mmalgo} benefits from the mild guarantee of proposition~\ref{thm:fixpointstop}. It guarantees that once the complementary distribution has been found, the algorithm will continue to produce a sequence of complementary distributions and a sequence of classifiers $f_t$ that approximates $\Sdf$.  

\section{Signed Distance Function learning framed as Adversarial Training}\label{app:sdfadv}

\begin{property}\label{thm:fixpointstop}
Let $Q^{t}$ be such that $Q^{t}\bedd\Prob_X$. Assume that $Q^{t+1}$ is obtained with algorithm~\ref{alg:mmalgo}. Then we have $Q^{t+1}\bedd\Prob_X$. 
\end{property}
\begin{proof}
The proof also follows from the properties of $\hkr$ loss given in Proposition 2 of~\cite{serrurier2021achieving}
Since $Q_t\bedd\Prob_X$ all examples $z\sim Q_t$ generated fulfill (by definition) $d(z,\support)\geq 2\epsilon\geq 2m$. Indeed the 1-Lipschitz constraint (in property~\ref{prop:gnphkr}) guarantees that no example $z_t$ can ``overshoot'' the boundary. Hence for the associated minimizer $f_{t+1}$ of $\hkr$ loss, the hinge part of the loss is null. This guarantees that $f_{t+1}(z)\leq -m$ for $z\sim Q$. We see that by applying algorithm~\ref{alg:newtonraphson} the property is preserved: for all $z\sim Q_{t+1}$ we must have $f_{t+1}(z)\leq -m=-\epsilon$. Finally notice that because $z_0\sim\Uniform(B)$ and $\eta\sim\Uniform([0,1])$ the support $\supp Q$ covers the whole space $B$ (except the points that are less than $2\epsilon$ apart from $\support$). Hence we have $Q_{t+1}\bedd\Prob_X$ as expected.
\end{proof}

\begin{algorithm}
\caption{Alternating Minimization for Signed Distance Function learning}
\label{alg:mmalgo}
\textbf{Input}: 1-Lipschitz neural network architecture $f_{\circ}$\\
\textbf{Input}: initial weights $\theta_0$, learning rate $\alpha$
\begin{algorithmic}[1] 
\REPEAT
\STATE $f_t\gets f_{\theta_t}$
\STATE $\tilde\theta\gets \theta_t$
    \REPEAT
    \STATE Generate batch $z\sim Q_t$ of negative samples with algorithm~\ref{alg:newtonraphson}
    \STATE Sample batch $x\sim\Prob_X$ of positive samples
    \STATE Compute loss on batch $\Loss(\tilde\theta)\defeq\Empirical^{\text{hkr}}(f_{\tilde\theta},x,z)$
    \STATE Learning step $\tilde\theta\gets \tilde\theta+\alpha\nabla_{\theta}\Loss(\tilde\theta)$
    \UNTIL{convergence of $\tilde\theta$ to $\theta_{t+1}$.}
\UNTIL{convergence of $f_t$ to limit $f^{*}$.}
\end{algorithmic}
\end{algorithm}

\subsection{Lazy variant of algorithm~\ref{alg:mmalgo}}

Algorithm~\ref{alg:mmalgo} solves a MaxMin problem with alternating maximization-minimization. The inner minimization step (minimization of the loss over 1-Lipschitz function space) can be expensive. Instead, partial minimization can be performed by doing only a predefined number of gradient steps. This yields the lazy approach of algorithm~\ref{alg:mmalgolazy}. This provides a considerable speed up over the initial implementation. Moreover, this approach is frequently found in literature, for example, with GAN~\cite{} or WGAN~\cite{}. However, we lose some of the mild guarantees, such as the one of Proposition~\ref{thm:fixpointstop} or even Theorem~\ref{thm:hkrsdf}. Crucially, it can introduce unwanted oscillations in the training phase that can impede performance and speed. Hence this trick should be used sparingly.   

\begin{algorithm}
\caption{One Class Signed Distance Function learning}
\label{alg:mmalgolazy}
\textbf{Input}: 1-Lipschitz neural net architecture $f_{\circ}$, initial weights $\theta_0$, learning rate $\alpha$, number of parameter update per time step $K$
\begin{algorithmic}[1] 
\REPEAT
\STATE $\Tilde{\theta} \gets \theta_t$
\STATE Generate batch $z\sim Q_t$ of negative samples with algorithm~\ref{alg:newtonraphson}
\STATE Sample batch $x\sim\Prob_X$ of positive samples
\FOR{$K$ updates }
\STATE Compute loss on batch $\Loss(\theta)\defeq\Empirical^{\text{hkr}}(f_{\Tilde{\theta}},x,z)$
\STATE Learning step $\Tilde{\theta} \gets\Tilde{\theta}+\alpha\nabla_{\theta}\Loss(\Tilde{\theta})$
\ENDFOR
\STATE $\theta_{t+1} \gets \Tilde{\theta}$
\UNTIL{convergence of $f_t$.}
\end{algorithmic}
\end{algorithm}
\vspace{-0.3cm}

The procedure of algorithm~\ref{alg:mmalgolazy} bears numerous similarities with the adversarial training of Madry~\cite{madry2017towards}. In our case the adversarial examples are obtained by starting from noise $\Uniform(B)$ and relabeled are negative examples. In their case, the adversarial examples are obtained by starting from $\Prob_X$ itself and relabeled as positive examples.  

\section{Certifiable AUROC (Proposition \ref{certifauroc})}\label{app:certifauroc}

\begin{proposition}[certifiable AUROC]
 Let $F_0$ be the cumulative distribution function associated with the negative classifier's prediction (when $f(x) = 0$), and $p_1$ the probability density function of the positive classifier's prediction (when $f(x) = 0$). Then, for any attack of radius $\epsilon > 0$, the AUROC of the attacked classifier $f_{\epsilon}$ can be bounded by
\begin{equation}
    \text{AUROC}(f_{\epsilon}) = \int_{-\infty}^{\infty} F_0(t)p_1(t-2\epsilon)dt.
\end{equation}
\end{proposition}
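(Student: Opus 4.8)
\textbf{Proof plan for Proposition~\ref{certifauroc}.} The plan is to recall the standard integral formula for AUROC in terms of the score distributions of the two classes, then track how an $\ell_2$-bounded adversary shifts these distributions, and finally simplify using the worst-case shift. First I would write AUROC as the probability that a random positive example scores above a random negative example:
\[
\text{AUROC}(f) = \Prob_{x\sim\Prob_X,\, z\sim Q}\big(f(x) > f(z)\big) = \int_{-\infty}^{\infty} F_0(t)\, p_1(t)\, dt,
\]
where $F_0$ is the CDF of the negative-class scores $f(z)$ and $p_1$ the density of the positive-class scores $f(x)$; this is the well-known rank-statistic identity (conditioning on the positive score equal to $t$ and integrating). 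I would state this as the $\epsilon=0$ baseline.

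Next I would invoke the 1-Lipschitz property: for any attack $\delta$ with $\|\delta\|_2 \le \epsilon$, we have $f(x+\delta) \in [f(x)-\epsilon,\, f(x)+\epsilon]$. The adversary trying to \emph{minimize} AUROC wants positive examples to look anomalous (push their score down by $\epsilon$) and negative examples to look normal (push their score up by $\epsilon$). The worst case therefore replaces the positive-score density $p_1(t)$ by the density of $f(x)-\epsilon$, namely $p_1(t+\epsilon)$, and replaces $F_0(t)$ — the probability that the negative score is below $t$ — by the CDF of $f(z)+\epsilon$, namely $F_0(t-\epsilon)$. Substituting into the integral and performing the change of variables $t \mapsto t+\epsilon$ collapses the two separate $\epsilon$-shifts into a single $2\epsilon$ shift, yielding
\[
\text{AUROC}(f_\epsilon) = \int_{-\infty}^{\infty} F_0(t)\, p_1(t-2\epsilon)\, dt,
\]
which is the claimed expression.

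The main obstacle, and the step that needs the most care, is justifying that these per-example worst-case shifts combine into a genuine \emph{lower bound} on the attacked AUROC rather than merely a heuristic: the adversary chooses $\delta$ per example, so one must argue that shifting every positive score down by exactly $\epsilon$ and every negative score up by exactly $\epsilon$ is simultaneously achievable (it is, since the perturbations are independent across examples and the gradient direction $\nabla_x f$ gives a realizable $\ell_2$-step of norm $\epsilon$ on the locally affine regions, by the piecewise-affine property of GroupSort networks), and that this is indeed the \emph{worst} such choice — monotonicity of the integrand $F_0(t)p_1(t)$ under downward shifts of $p_1$ and the fact that $t \mapsto F_0(t)$ is nondecreasing make the $2\epsilon$-shifted configuration optimal for the attacker. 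I would also note the mild measurability/continuity hypotheses (existence of the density $p_1$, the split of the prediction distribution at the threshold) that make the integral formula valid, and remark that the bound is attained in the limiting adversarial regime, so it is tight.
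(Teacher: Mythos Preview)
Your approach mirrors the paper's: express AUROC via the rank-statistic integral $\int F_0(t)\,p_1(t)\,dt$, use the 1-Lipschitz bound to control how far an $\epsilon$-attack can move each score, and then collapse the two $\epsilon$-shifts into a single $2\epsilon$ translation by a change of variable. The paper derives the baseline integral from the ROC-curve parametrization rather than directly from $\Prob(S_+>S_-)$, but that is cosmetic; the substance is identical, and your additional remarks on achievability and monotonicity are reasonable embellishments.

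There is, however, a sign slip in your last step. From your (correct) intermediate expression
\[
\int_{-\infty}^{\infty} F_0(t-\epsilon)\,p_1(t+\epsilon)\,dt,
\]
any shift of the integration variable gives either $\int F_0(t)\,p_1(t+2\epsilon)\,dt$ or $\int F_0(t-2\epsilon)\,p_1(t)\,dt$, never $\int F_0(t)\,p_1(t-2\epsilon)\,dt$. Equivalently, the worst-case attacked AUROC is $\Prob(S_+-\epsilon>S_-+\epsilon)=\Prob(S_+-2\epsilon>S_-)$, and a quick sanity check confirms the sign: a valid certificate must \emph{decrease} in $\epsilon$, whereas $\int F_0(t)\,p_1(t-2\epsilon)\,dt=\Prob(S_++2\epsilon>S_-)\to 1$ as $\epsilon\to\infty$. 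The paper's own proof carries the same sign inconsistency---its intermediate line $\int F_{-1}(t+\epsilon)\,p_1(t-\epsilon)\,dt$ has the two shifts reversed relative to yours---so the issue is inherited from the stated formula rather than created by your argument. Still, your correct intermediate expression and the claimed conclusion are incompatible, and you should flag the discrepancy rather than force the final substitution to match.
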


\begin{proof}

Let $p_1$ (resp. $p_{-1}$) be the probability density function (PDF) associated with the classifier's positive (resp. negative) predictions. More precisely, $p_1$ (resp. $p_{-1}$) is the PDF of $f_{\sharp}\Prob_X$ (resp. $f_{\sharp}Q$) for some adversarial distribution $Q$, where $f_{\sharp}\cdot$ denotes the pushforward measure operator~\cite{bogachev2007measure} defined by the classifier. The operator $f_{\sharp}$ formalizes the shift between $\mathbb{P}_X$ (resp. $Q$), the ground truth distributions, and $p_{-1}$ (resp. $p_1$), the imperfectly distribution fitted by $f$. Let $F_{-1}$ and $F_1$ be the associated cumulative distribution functions. For a given classification decision threshold $\tau$, we can define the True Positive Rate (TPR) $F_{-1}(\tau)$, the True Negative Rate (TNR) $1 - F_1(\tau)$, and the False Positive Rate (FPR) $F_{1}(\tau)$. The ROC curve is then the plot of $F_{-1}(\tau)$ against $F_{1}(\tau)$. Hence, setting $v = F_1(t)$, we can define the AUROC as:
\begin{equation}
    \text{AUROC}(f) = \int_{0}^{1} F_{-1}(F_1^{-1}(v))dv\\
\end{equation}

And with the change of variable $dv = p_1(t)dt$

\begin{equation}
    \text{AUROC}(f) = \int_{-\infty}^{\infty} F_{-1}(t)p_1(t)dt.
\end{equation}

We consider a scenario with symmetric attacks: the attack decreases (resp. increases) the normality score of One Class (resp. Out Of Distribution samples) for decision threshold $\tau\in\Reals$. When the 1-Lipschitz classifier $f$ is under attacks of radius at most $\epsilon>0$ we note $f_{\epsilon}$ the perturbed classifier:
\begin{equation}
    f_{\epsilon}(x)=\min_{\delta \leq \epsilon} (2\indicator\{f(x)\geq \tau\}-1) f(x + \delta).
\end{equation}
Note that $f_{\epsilon}(x)\leq f(x)+\epsilon$ when $f(x)<\tau$ and $f_{\epsilon}(x)\geq f(x)-\epsilon$ when $f(x)\geq \tau$ thanks to the 1-Lipschitz property. This effectively translates the pdf of $f_{\epsilon}$ by $|\epsilon|$ atmost.  
  
We obtain a lower bound for the AUROC (i.e a certificate):
\begin{equation}
\begin{aligned}
    \text{AUROC}(f_{\epsilon}) &\geq \int_{-\infty}^{\infty} F_{-1}(t+\epsilon)p_1(t-\epsilon)dt\\
                               &= \int_{-\infty}^{\infty} F_{-1}(t)p_1(t-2\epsilon)dt.
\end{aligned}
\end{equation}

\end{proof}
The certified AUROC score can be computed analytically without performing the attacks empirically, solely from score predictions $f_1(\tau-2\epsilon)$. More importantly, the certificates hold against \textit{any} adversarial attack whose $l2$-norm is bounded by $\epsilon$, regardless of the algorithm used to perform such attacks. We emphasize that producing certificates is more challenging than traditional defence mechanisms (e.g, adversarial training, see~\cite{ijcai2021p591} and references therein) since they do not target defence against a specific attack method.



\section{1-Lipschitz Neural Networks}\label{app:ortho}

We ensure that the kernel remains 1-Lipschitz by re-parametrizing them: $\Theta_i=\Pi(W_i)$ where $W_i$ is a set of unconstrained weights and $\Theta_i$ the orthogonal projection of $W_i$ on the Stiefel manifold - i.e the set of orthogonal matrices. The projection $\Pi$ is made with Bj\"orck algorithm~\cite{bjorck1971iterative}, which is differentiable and be included in the computation graph during forward pass. Unconstrained optimization is performed on $W_i$ directly.     

\section{Toy experiment in 2D}\label{app:toy2d}

All datasets are normalized to have zero mean and unit variance across all dimensions. The domain $B$ is chosen to be the ball of radius $5$. This guarantees that $\support\subset B$ for all datasets. The plots of figure\ref{fig:toy2d} are squares of sizes $[-5,5]$ for \textbf{(a)}, $[-3,3]$ for \textbf{(b)(c)(e)} and $[-4,4]$ for \textbf{(d)} to make the figure more appealing.     
  
\begin{figure}[!ht]
    \centering
    \includegraphics[width=\linewidth]{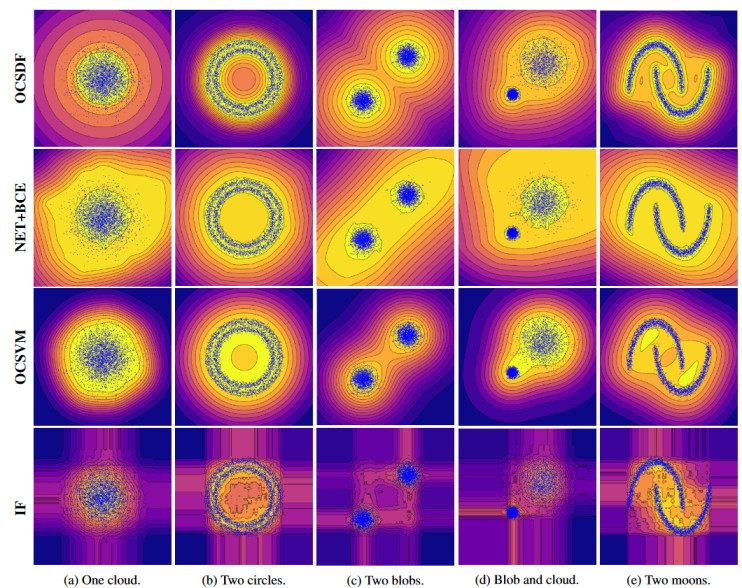}
    \caption{Toy examples of Scikit-learn. \textbf{Top row}: our method with Lipschitz (LIP) 1-Lipschitz network and $\hkr$ (HKR) loss. \textbf{Second row}: conventional network (NET) trained with Binary Cross Entropy (BCE). \textbf{Third row}: One Class SVM. \textbf{Fourth row}: Isolation Forest. }\label{fig:toy2dlarge}
\end{figure}

\subsection{One Class Learning}

All the toy experiments of figure~\ref{fig:toy2d} uses a $2\veryshortarrow512\veryshortarrow 512\veryshortarrow 512\veryshortarrow 512\veryshortarrow 1$ neural network. All the squares matrices are constrained to be orthogonal. The last layer is a unit norm column vector. The first layer consists of two unit norm columns that are orthogonal to each other. The optimizer is Rmsprop with default hyperparameters. The batch size is $b=256$, and the number of steps $T=4$ is small. We chose a margin $m=0.05$ except for ``blob and cloud'' dataset where we used $m=0.1$ instead. We take $\lambda=100$. The networks are trained for a total of $10,000$ gradient steps.    

\subsection{Other benchmarks}
  
For One Class SVM, we chose a parameter $\nu=0.05$, $\gamma=\frac{1}{2}$ (which corresponds to the ``scale'' behavior of scikit-learn for features in 2D with unit variance) and the popular RBF kernel. For Isolation Forest, we chose a default contamination level of $0.05$.   
  
The conventional network (without orthogonality constraint) is trained with Binary Cross Entropy (also called log-loss) and Adam optimizer. It shares the same architecture with ReLU instead of GroupSort. It is also trained for a total of $10,000$ gradient steps for a fair comparison. It would not make sense to use $\hkr$ loss for a conventional network since it diverges during training, as noticed in~\cite{bethune2022pay}. The Lipschitz constant of the conventional network grows uncontrollably during training even with Binary Cross-Entropy loss, which is also compliant with the results of~\cite{bethune2022pay}.

\begin{figure}[!ht]
    \centering
    \includegraphics[width=\linewidth]{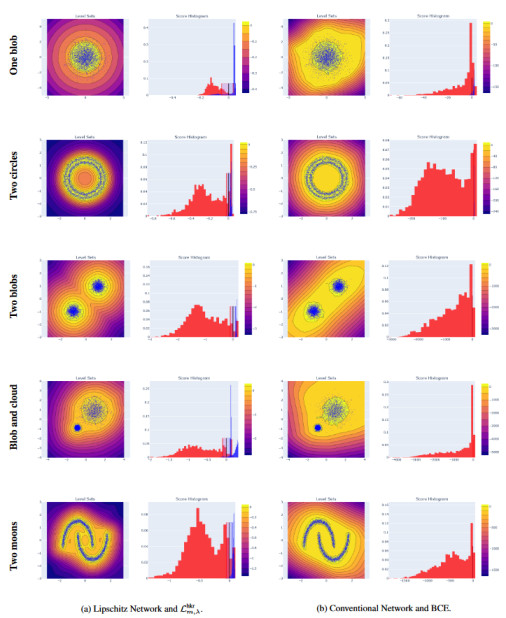}
    \caption{Histograms of score functions for 1-Lipschitz network (left) and conventional neural network. The blue bars correspond to the distribution of $f(x),x\sim\Prob_X$ and the red bars to the distribution $f(z),z\sim\Uniform(B)$.}\label{fig:toy2dhist}
\end{figure}

\section{Toy experiments on 3D point clouds}\label{app:toy3d}

In figures~\ref{fig:cadtool1} and~\ref{fig:cadtool2}, we provide additional examples of the use of SDF to reconstruct shapes from point clouds. We also compare OCSDF with Deep SDF \cite{park2019deepsdf}, a standard baseline for neural implicit surface parametrization, to highlight the practical advantages of our method.

\begin{table}[ht]
    \centering
    \small
    \begin{tabular}{c|cc}
    \hline
      \makecell{Methods\\} & \makecell{OCSDF\\ (ours)} & \makecell{DeepSDF\\ \cite{park2019deepsdf}}\\
    \hline
    Target & \makecell{Generate $Q\bedd P$} & \makecell{Compute $\Sdf(x)$ from $P$}\\
    Cost         & \makecell{Backward pass} & \makecell{Nearest Neighbor search} \\
    Loss         & HKR $\hkr$ & $\|f(x)-\Sdf(x)\|^2_2$\\
    Guarantees   & $f$ is 1-Lipschitz & None\\
    \end{tabular}
    \caption{Comparison of our approach against DeepSDF.}
    \label{tab:comparisondeepsdf}
\end{table}

\begin{figure}[!ht]
    \centering
    \includegraphics[width=1.\linewidth]{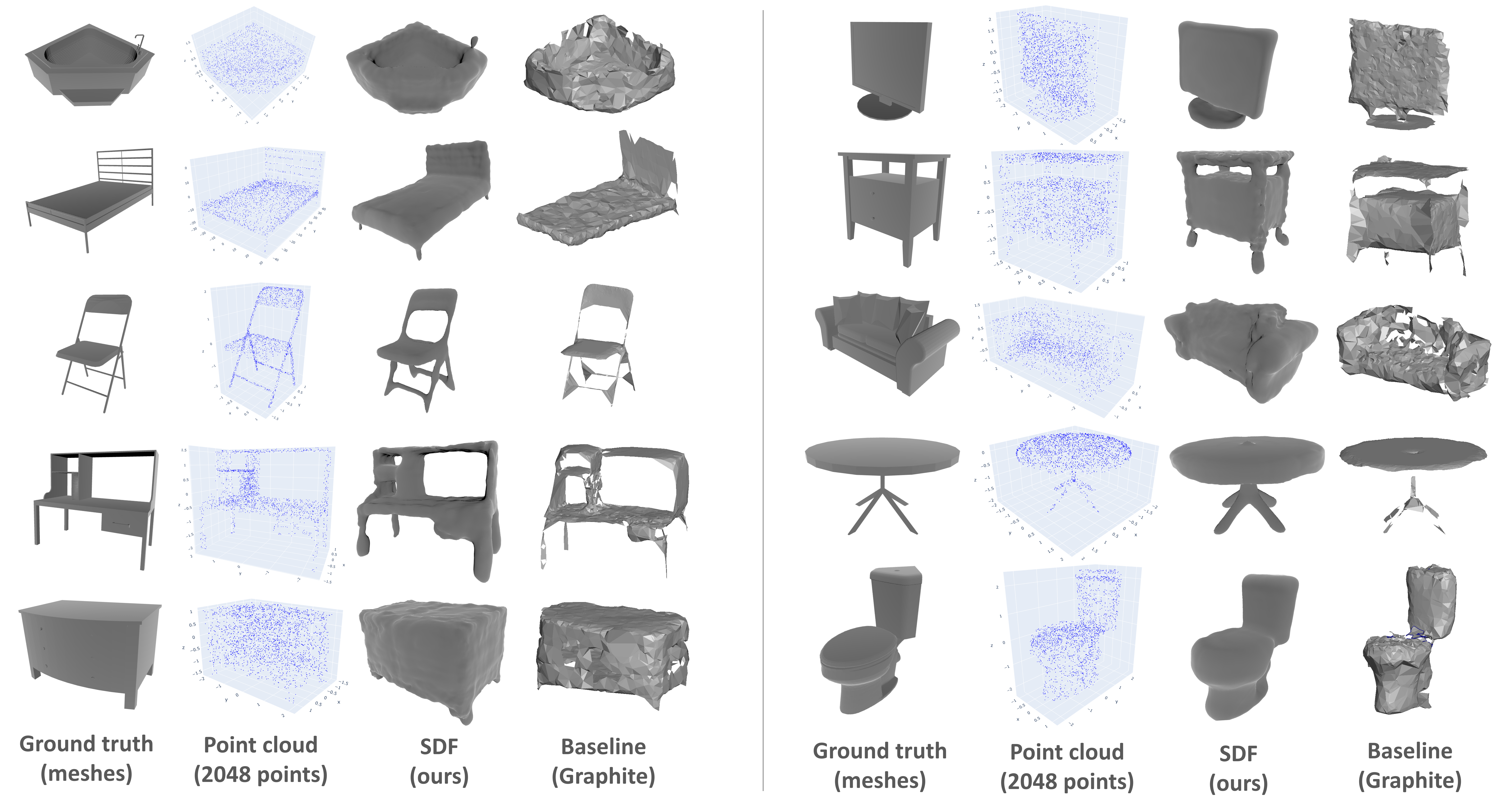}
    \caption{SDF of $2048$ points sampled from the mesh of a 3D CAD model.}
    \label{fig:cadtool1}
\end{figure}

\begin{figure}[!ht]
    \centering
    \includegraphics[width=0.7\linewidth]{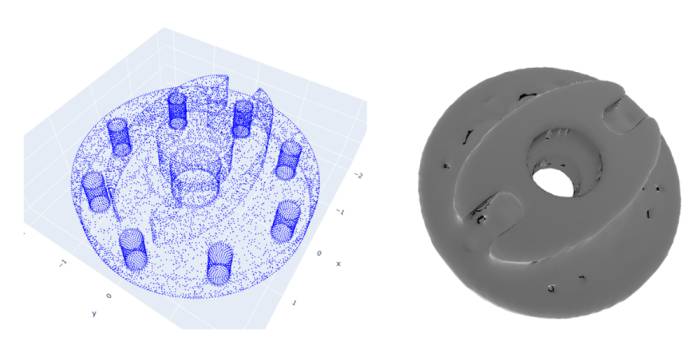}
    \caption{SDF of $2048$ points sampled from the mesh of a 3D CAD model.}
    \label{fig:cadtool2}
\end{figure}

\section{Tabular data}

The optimizer is RMSprop with default hyper-parameters. We use the lazy variant, i.e algorithm~\ref{alg:mmalgolazy}. We chose $\lambda=100$ and $m\in\{0.01, 0.05, 0.2, 1.\}$. The results are averaged over $20$ runs, and we report the highest average among all values of $m$. We use a batch size $b=128$ a number of steps $T=4$. The network is trained for a total of $40$ epochs over the one class, using a warm start of $5$ epoch with $T=0$.  

\paragraph{One Class (OC) protocol}: in this setting the normal examples are split in train and test set (approximately $50\%$ of data each). The network is trained on normal \textit{train} examples. The AUROC score is computed between  normal \textit{test} examples and anomalies. We report the results in table~\ref{tab:tabularrococ}.  

\paragraph{Anomaly Detection (AD) protocol}: in this setting the normal examples and anomalies are part of the train set. The network is trained on all examples. The AUROC score is computed between  normal examples and anomalies. We report the results in table~\ref{tab:tabularrocad}.  

\begin{table}
    \centering
    \small
    \begin{tabular}{ccc|c}
    \hline
      Dataset & $\bm{d}$ &\#train & OCSDF (Ours) \\
    \hline
    \hline
    Arrhythmia & 274 & 193 & $80.0\pm 1.3$\\
    Thyroid & 6 & 1,839 & $98.3\pm 0.1$\\     
    Mammography & 6 & 5,591 & $88.0\pm 1.4$\\
    Vowels & 12 & 703 & $96.1\pm 0.9$\\
    Satimage-2 & 36 & 2,866 & $97.8\pm 0.0$\\
    smtp & 3 & 47,553 & $83.8\pm 3.5$\\
    \end{tabular}
    \caption{AUROC score on \textit{test set + anomalies} with tabular data, averaged over $20$ runs. The dimension of the dataset is denoted by $\bm{d}$. In the \textbf{One Class protocol (OC)}, the classifier is trained on a subset of normal data, and test on unseen normal data and anomalous data. The ``\#train'' column indicates train set size.}
    \label{tab:tabularrococ}
\end{table}

\section{One Class Classification of image data}\label{app:occimage}

\subsection{Mnist experiment}

The MNIST images are normalized such that pixel intensity lies in $[-1,1]$ range. The set $B$ is chosen to be the image space, i.e $B=[-1,1]^{28\times 28}=[-1,1]^{784}$. The optimizer is RMSprop with default hyperparameters. We chose $m=0.02\times \sqrt{(28\times 28\times (1-(-1))}\approx 0.79$ : this corresponds to modification of $1$\% of the maximum norm of an image. We take $\lambda=200$. We use a batch size $b=128$, and a number of steps $T=16$. We use the lazy variant, i.e algorithm~\ref{alg:mmalgolazy}. The network is trained for a total of $70$ epochs over the one class (size of the support: $\approx 6000$ examples), using a warm start of $10$ epoch with $T=0$. The learning rate follows a linear decay from $1e^{-3}$ to $1e^{-6}$. 

All the experiments use a VGG-like architecture, depicted in table~\ref{tab:mnistarch}. Convolutions are parametrized using layers of Deel-Lip library~\cite{serrurier2021achieving}, which use an orthogonal kernel with a corrective constant on the upper bound of the Lipschitz constant of the kernel. Dense layers also use orthogonal kernels. We use \textit{l2-norm-pooling} layer with windows of size $2\times 2$ that operates as: $(x_{11},x_{12},x_{21},x_{22})\mapsto \|[x_{11},x_{12},x_{21},x_{22}]\|$.  
  
We see in table~\ref{tab:mnistroc} that it yields competitive results against other naive baselines such as Isolation Fortest (IF) or One Class SVM (OCSVM), and against the popular deep learning algorithm \textit{Deep SVDD}~\cite{ruff2018deep}.  

\paragraph{Against DeepSVDD.} We test 4 configurations for DeepSVDD.
\begin{enumerate}
    \item The one from the original's paper~\cite{ruff2018deep} with LeNet architecture.
    \item The Pytorch implementation from their official repository that combines LeNet and BatchNorm: \url{https://github.com/lukasruff/Deep-SVDD-PyTorch}.
    \item The Tensorflow implementation from pyod~\cite{zhao2019pyod} with dense layers and l$2$ activation regularization.
    \item The Tensorflow implementation from pyod but with the same (unconstrained) architecture as Table~\ref{tab:mnistarch} to ensure a fair comparison.
\end{enumerate}
Each configuration is run $10$ times, and the results are averaged. The best average among the four configurations is reported in the main paper, while we report in Table~\ref{tab:extensivedeepsvdd} the results of all the configurations. It appears that the method of the original paper is very sensitive to the network's architecture, and the results are hard to reproduce overall. Empirically, we also observe that the test/OOD AUROC is often smaller than train/OOD AUROC, which suggests that DeepSVDD memorizes the train set but do not generalize well outside of its train data. We also notice that those results are different from the ones originally reported by authors in their paper.   
  
\begin{table}
    \centering
    \begin{tabular}{|c|cc|}
            \hline
            Datasets & Toy2D \& tabular\& Implicit Shape & Mnist \& Cifar-10 \\
            \hline
            \hline
            \# parameters & 792K  &  2,103K  \\
            \hline
            layer 1 &   dense-512 (fullsort)  &  conv-3x3-64 (groupsort)\\
            layer 2 &   dense-512 (fullsort)  &  conv-3x3-64 (groupsort)\\
            layer 3 &   dense-512 (fullsort)  &  conv-3x3-64 (groupsort)\\
            layer 4 &   dense-512 (fullsort)  &  conv-3x3-64 (groupsort)\\
            layer 5 &   dense-1 (linear)      &  l2 norm pooling - 2x2 \\
            layer 6 &                         &  conv-3x3-128 (groupsort)\\
            layer 7 &                         &  conv-3x3-128 (groupsort)\\
            layer 8 &                         &  conv-3x3-128 (groupsort)\\
            layer 9 &                         &  conv-3x3-128 (groupsort)\\
            layer 10 &                         &  l2 norm pooling - 2x2 \\
            layer 11 &                         &  conv-3x3-256 (groupsort)\\
            layer 12 &                         &  conv-3x3-256 (groupsort)\\
            layer 13 &                         &  conv-3x3-256 (groupsort)\\
            layer 14 &                         &  l2 norm pooling - 2x2 \\
            layer 15 &                         &  global l2 norm pooling \\
            layer 16 &                         &  dense-1 (linear) \\
            \hline
            \end{tabular}
    \caption{Architectures of the 1-Lipschitz networks used for the experiments, using layers of Deel-Lip library.}
    \label{tab:mnistarch}
\end{table}

\begin{table}
    \centering
    \begin{tabular}{c|cccc}
            \hline
            Protocol & \makecell{As described\\ in article} & \makecell{As in Pytorch\\ repository} & \makecell{As in Pytorch repository\\ (best run)} & \makecell{Pyod\\ \cite{zhao2019pyod}}\\
            \hline
            \hline
            mean & $79.8\pm 2.9$ & $88.6\pm 6.1$ & $94.0\pm 3.6$ & $85.0\pm 7.8$\\
            \hline
            0 & $88.6\pm 0.$ & $90.2\pm 9.4$ & $97.2$ & $83.7\pm 0.$\\
            1 & $97.1\pm 0.$ & $99.6\pm 0.1$ & $99.7$ & $97.2\pm 0.$\\
            2 & $66.5\pm 0.$ & $79.9\pm 8.8$ & $88.8$ & $80.5\pm 0.$\\
            3 & $70.9\pm 0.$ & $80.5\pm 8.4$ & $91.3$ & $79.6\pm 0.$\\
            4 & $78.0\pm 0.$ & $88.3\pm 7.7$ & $93.9$ & $85.4\pm 0.$\\
            5 & $69.7\pm 0.$ & $82.8\pm 6.0$ & $89.5$ & $72.9\pm 0.$\\
            6 & $85.2\pm 0.$ & $95.8\pm 1.9$ & $97.8$ & $93.7\pm 0.$\\
            7 & $88.9\pm 0.$ & $91.0\pm 3.3$ & $94.7$ & $92.8\pm 0.$\\
            8 & $72.8\pm 0.$ & $84.6\pm 4.7$ & $90.5$ & $74.8\pm 0.$\\
            9 & $79.9\pm 0.$ & $93.3\pm 3.0$ & $96.1$ & $89.9\pm 0.$\\
            \end{tabular}
    \caption{Average AUROC over $10$ runs for each digit of Mnist. We re-code the method in \textit{Tensorflow} using various configurations, due to the discrepancies of protocols found in the literature. In overall, DeepSVDD is very sensitive to the network architecture and can dramatically overfit.}
    \label{tab:extensivedeepsvdd}
\end{table}
  
We report the histogram of predictions for the train set (One class), the set test (One class) and Out Of Distribution (OOD) examples (the classes of the test set) in figure~\ref{fig:mnisthisto}.

\begin{figure}[!ht]
    \centering
    \includegraphics[width=\linewidth]{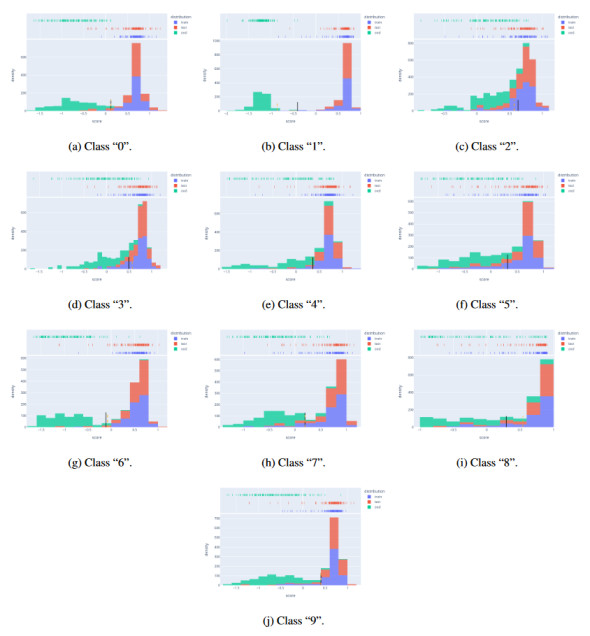}
    \caption{Histogram of scores predicted by the classifier at the end of training for \textbf{train} examples (in blue), \textbf{test} examples (in red), and \textbf{OOD Test} examples (in green) on Mnist after one of the runs of the algorithm.}\label{fig:mnisthisto}
\end{figure}

\begin{table}
    \centering
    \begin{tabular}{|c|ccccc|}
        \hline
       Dataset          & One cloud  & Two circles & Two blobs & Blob and cloud & Two moons \\
       \hline
       LLC conventional &   26.66    &    122.84   &  1421.41  &      53.90     &   258.73  \\ 
       \hline
    \end{tabular}
    \caption{Lower bound on the Local Lipschitz Constant (LLC) of the conventional network after $10,000$ training steps. It is obtained by computing the maximum of $\|\nabla_{x_i}f(x_i)\|$ over the train set. The LLC of the conventional network grows uncontrollably during training and fails to provide metric guarantees.}
    \label{tab:llc}
\end{table}

\subsection{Cifar-10}

The Cifar-10 images are centedered-reduced per channel (the certificates and the attacks take into account the rescaling in the computation of the radius). All pixel intensities fall in $[-2.1,2.14]$ interval. The set $B$ is chosen to be the image space, i.e $B=[-2.1,2.14]^{32\times 32\times 3}=[-2.1, 2.14]^{3072}$. The optimizer is RMSprop with default hyperparameters. We chose $m=0.002\times \sqrt{(32\times 32\times (2.14-(-2.1))}\approx 0.11$ : this corresponds to modification of $0.2$\% of the maximum norm of an image. We take $\lambda=1000$. We use a batch size $b=128$, and a number of steps $T=32$. We use the lazy variant, i.e algorithm~\ref{alg:mmalgolazy}. The network is trained for a total of $90$ epochs over the one class (size of the support: $\approx 5000$ examples), using a warm start of $10$ epoch with $T=0$. The learning rate follows a linear decay from $2.5e^{-4}$ to $2.5e^{-7}$.  

Note that the \textit{same} hyper-parameters are used for all the runs and all the classes. This is different from some practices of litterature (see fig~11. in~\cite{goyal_drocc_2020} ). Unfortunately this protocol is somewhat unfair since it explicitly leverages prior knowledge on the other classes - it can not be used in a real world scenario where the other classes are not available. Nonetheless, for completeness, we report in table~\ref{tab:cheatingprotocolcifar} the clean AUROC on Cifar10 with \textit{per class} tuning of the margin $m$.

\begin{table*}
    \centering
    \small
    \begin{tabular}{|cccccccccc|c|}
    \hline
    Airplane & Automobile & Bird & Cat & Deer & Dog & Frog & Horse & Ship & Truck & Mean\\
    \hline
    \hline
    $79.0$ & $65.2$ & $55.4$ & $60.4$ & $52.5$ & $56.9$ & $57.7$ & $55.0$ & $71.9$ & $67.0$ & $62.1\pm 8.0$\\
    \hline
    \end{tabular}
    \caption{Best AUROC of OCSDF on Cifar-10 with \textit{per class} tuning of the margin $m$.}
    \label{tab:cheatingprotocolcifar}
\end{table*}

\subsection{Empirical robustness}

We also report the complete results of the empirical robustness of OCSDF versus DeepSVDD for MNIST and Cifar10 in tables \ref{tab:empiricalmnistsvdd}, \ref{tab:empiricalOCSDFmnist}, \ref{tab:empiricalcifar10svdd} and \ref{tab:empiricalOCSDFcifar10}.

\begin{table*}
    \centering
    \small
    \begin{tabular}{|c|cccccc|}
    \hline
      Mnist & DeepSVDD & DeepSVDD & DeepSVDD & DeepSVDD & DeepSVDD & DeepSVDD \\
      l$2$-PGD & $\epsilon=8/255$ & $\epsilon=16/255$ & $\epsilon=36/255$ & $\epsilon=72/255$ & $\epsilon=144/255$ & $\epsilon=255/255$\\
    \hline
    \hline
    mean & $90.2\pm 6.4$ & $89.7\pm 6.9$ & $88.2\pm 8.4$ & $84.5\pm 12.2$ & $74.1\pm 21.7$ & $61.9\pm 29.0$\\
    \hline
    digit 0 & $96.9$ & $96.5$ & $95.3$ & $91.6$ & $78.8$ & $56.9$\\
    digit 1 & $99.6$ & $99.6$ & $99.6$ & $99.5$ & $99.5$ & $99.4$\\
    digit 2 & $80.6$ & $78.9$ & $73.7$ & $61.2$ & $29.4$ & $08.4$\\
    digit 3 & $87.2$ & $86.7$ & $85.4$ & $82.2$ & $69.6$ & $47.8$\\
    digit 4 & $90.7$ & $90.7$ & $90.7$ & $90.5$ & $90.0$ & $89.4$\\
    digit 5 & $83.0$ & $81.5$ & $76.7$ & $65.9$ & $44.2$ & $28.8$\\
    digit 6 & $97.7$ & $97.6$ & $97.0$ & $95.2$ & $86.8$ & $69.2$\\
    digit 7 & $93.3$ & $93.2$ & $92.9$ & $92.4$ & $90.9$ & $89.4$\\
    digit 8 & $82.7$ & $82.1$ & $80.3$ & $76.1$ & $61.5$ & $39.7$\\
    digit 9 & $90.5$ & $90.5$ & $90.4$ & $90.4$ & $90.2$ & $89.7$\\
    \hline
    \end{tabular}
    \caption{Empirical AUROC against l$2$-PGD adversarial attacks on Mnist for DeepSVDD.}
    \label{tab:empiricalmnistsvdd}
\end{table*}

\begin{table*}
    \centering
    \small
    \begin{tabular}{|c|cccccc|}
    \hline
      Mnist & OCSDF & OCSDF & OCSDF & OCSDF & OCSDF & OCSDF \\
      l$2$-PGD & $\epsilon=8/255$ & $\epsilon=16/255$ & $\epsilon=36/255$ & $\epsilon=72/255$ & $\epsilon=144/255$ & $\epsilon=255/255$\\
    \hline
    \hline
    mean & $93.2\pm 5.0$ & $92.8\pm 5.3$ & $91.8\pm 6.0$ & $89.8\pm 7.4$ & $83.1\pm 11.9$ & $68.2\pm 19.5$\\
    \hline
    digit 0 & $99.4$ & $99.4$ & $99.3$ & $99.1$ & $98.2$ & $94.9$\\
    digit 1 & $99.1$ & $99.0$ & $98.8$ & $98.4$ & $96.7$ & $90.9$\\
    digit 2 & $89.7$ & $89.1$ & $87.5$ & $84.4$ & $74.0$ & $51.0$\\
    digit 3 & $91.1$ & $90.5$ & $89.2$ & $86.4$ & $77.3$ & $56.6$\\
    digit 4 & $95.2$ & $94.9$ & $94.1$ & $92.6$ & $87.1$ & $73.0$\\
    digit 5 & $84.6$ & $83.7$ & $81.2$ & $76.3$ & $60.8$ & $33.3$\\
    digit 6 & $98.0$ & $97.9$ & $97.6$ & $96.9$ & $94.4$ & $86.9$\\
    digit 7 & $95.3$ & $95.0$ & $94.4$ & $93.0$ & $88.5$ & $77.2$\\
    digit 8 & $85.8$ & $85.1$ & $83.3$ & $79.7$ & $68.7$ & $46.9$\\
    digit 9 & $93.9$ & $93.6$ & $92.7$ & $91.0$ & $85.2$ & $70.9$\\
    \hline
    \end{tabular}
    \caption{Empirical AUROC against l$2$-PGD adversarial attacks on Mnist for one run of OCSDF in One Class learning.}
    \label{tab:empiricalOCSDFmnist}
\end{table*}

\begin{table*}
    \centering
    \small
    \begin{tabular}{|c|cccccc|}
    \hline
      CIFAR10 & DeepSVDD & DeepSVDD & DeepSVDD & DeepSVDD & DeepSVDD & DeepSVDD \\
      l$2$-PGD & $\epsilon=8/255$ & $\epsilon=16/255$ & $\epsilon=36/255$ & $\epsilon=72/255$ & $\epsilon=144/255$ & $\epsilon=255/255$\\
    \hline
    \hline
    mean & $54.4\pm 8.3$ & $46.5\pm 8.1$ & $28.4\pm 7.0$ & $09.2\pm 3.7$ & $0.7\pm 0.6$ & $0.1\pm 0.0$\\
    \hline
    Airplane & $62.82$ & $55.09$ & $36.21$ & $12.57$ & $0.76$ & $0.01$\\
    Automobile & $43.02$ & $36.94$ & $21.98$ & $8.35$ & $1.24$ & $0.09$\\
    Bird & $58.50$ & $49.13$ & $27.66$ & $6.47$ & $0.16$ & $0.01$\\
    Cat & $45.80$ & $36.82$ & $18.76$ & $3.91$ & $0.12$ & $0.00$\\
    Deer & $63.02$ & $53.55$ & $30.82$ & $7.41$ & $0.23$ & $0.00$\\
    Dog & $50.33$ & $43.32$ & $27.95$ & $10.88$ & $1.43$ & $0.09$\\
    Frog & $63.44$ & $55.31$ & $35.33$ & $11.12$ & $0.63$ & $0.01$\\
    Horse & $44.67$ & $36.55$ & $20.09$ & $5.37$ & $0.26$ & $0.00$\\
    Ship & $64.38$ & $57.67$ & $40.89$ & $17.47$ & $1.87$ & $0.04$\\
    Truck & $48.39$ & $40.85$ & $24.79$ & $8.46$ & $0.81$ & $0.03$\\
    \hline
    \end{tabular}
    \caption{Empirical AUROC against l$2$-PGD adversarial attacks on Cifar10 for DeepSVDD.}
    \label{tab:empiricalcifar10svdd}
\end{table*}

\begin{table*}
    \centering
    \small
    \begin{tabular}{|c|cccccc|}
    \hline
      CIFAR10 & OCSDF & OCSDF & OCSDF & OCSDF & OCSDF & OCSDF \\
      l$2$-PGD & $\epsilon=8/255$ & $\epsilon=16/255$ & $\epsilon=36/255$ & $\epsilon=72/255$ & $\epsilon=144/255$ & $\epsilon=255/255$\\
    \hline
    \hline
    mean & $59.0\pm 9.3$ & $58.2\pm 9.2$ & $56.7\pm 9.6$ & $53.9\pm 10.3$ & $48.4\pm 10.3$ & $39.8\pm 13.8$\\
    \hline
    Airplane & $80.1$ & $79.8$ & $79.0$ & $77.4$ & $73.9$ & $67.6$\\
    Automobile & $60.2$ & $59.5$ & $57.7$ & $54.4$ & $47.3$ & $35.7$\\
    Bird & $52.7$ & $51.9$ & $49.9$ & $46.2$ & $38.8$ & $28.0$\\
    Cat & $57.7$ & $57.0$ & $55.4$ & $52.3$ & $46.1$ & $36.6$\\
    Deer & $50.8$ & $50.0$ & $48.0$ & $44.4$ & $37.3$ & $27.1$\\
    Dog & $55.8$ & $55.2$ & $53.5$ & $50.5$ & $44.4$ & $35.0$\\
    Frog & $54.7$ & $54.0$ & $52.2$ & $49.1$ & $42.8$ & $32.9$\\
    Horse & $48.0$ & $47.3$ & $45.4$ & $42.0$ & $35.3$ & $25.5$\\
    Ship & $71.7$ & $69.2$ & $68.7$ & $67.5$ & $65.1$ & $61.5$\\
    Truck & $57.9$ & $57.6$ & $56.9$ & $55.4$ & $52.5$ & $48.1$\\
    \hline
    \end{tabular}
    \caption{Empirical AUROC against l$2$-PGD adversarial attacks on Cifar10 for one run of OCSDF in One Class learning.}
    \label{tab:empiricalOCSDFcifar10}
\end{table*}

\subsection{Image synthesis from One Class classifier}

We perform a total of $T=64$ steps with algorithm~\ref{alg:newtonraphson} by targeting the level set $f^{-1}(\{\Expect_{\Prob_X}[f(x)]\})$ (to ensure we stay inside the distribution). Moreover, during image synthesis, we follow a deterministic path by setting $\eta=1$. The images generated were picked randomly (with respect to initial sample $z_0\sim\Uniform(B)$) without cherry-picking. Interestingly, we see that the algorithm recovers some \textit{in-distribution} examples successfully.    
The examples for which the image is visually deceptive are somewhat correlated with low AUC scores. Those failure cases are also shared by concurrent methods, which suggests that some classes are harder to distinguish. Notice that sometimes Out Of Distribution (OOD) Mnist digits, from \textit{other classes not seen during training}, are sometimes generated. This suggests that most Mnist digits can be built from a small set of elementary features that are combined during the generation of $Q_t$. Visualizations of generated digits are presented in Figure \ref{fig:mnist_gen}.

\begin{figure}[!ht]
    \centering
    
    \begin{subfigure}{0.4\textwidth}
        \centering
        \includegraphics[width=1.\textwidth]{new_images/mnist/gan_0_cropped.PNG}
    \end{subfigure}\begin{subfigure}{0.4\textwidth}
        \centering
        \includegraphics[width=1.\textwidth]{new_images/mnist/gan_1_cropped.png}
    \end{subfigure}
    
    \begin{subfigure}{0.4\textwidth}
        \centering
        \includegraphics[width=1.\textwidth]{new_images/mnist/gan_2_cropped.png}
    \end{subfigure}\begin{subfigure}{0.4\textwidth}
        \centering
        \includegraphics[width=1.\textwidth]{new_images/mnist/gan_3_cropped.png}
    \end{subfigure}
    
    \begin{subfigure}{0.4\textwidth}
        \centering
        \includegraphics[width=1.\textwidth]{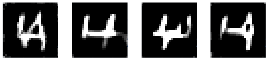}
    \end{subfigure}\begin{subfigure}{0.4\textwidth}
        \centering
        \includegraphics[width=1.\textwidth]{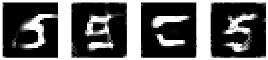}
    \end{subfigure}
    
    \begin{subfigure}{0.4\textwidth}
        \centering
        \includegraphics[width=1.\textwidth]{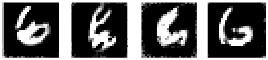}
    \end{subfigure}\begin{subfigure}{0.4\textwidth}
        \centering
        \includegraphics[width=1.\textwidth]{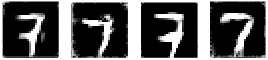}
    \end{subfigure}

    \begin{subfigure}{0.4\textwidth}
        \centering
        \includegraphics[width=1.\textwidth]{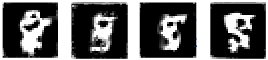}
    \end{subfigure}\begin{subfigure}{0.4\textwidth}
        \centering
        \includegraphics[width=1.\textwidth]{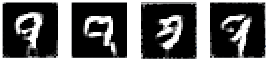}
    \end{subfigure}
    
    \caption{Synthetic examples obtained by running algorithm~\ref{alg:newtonraphson} with $T=64$ and $\eta=1$.}\label{fig:mnist_gen}
\end{figure}
  
Finally, note that our method is not tailored for example generation: this is merely a side effect of the training process of the classifier. There is no need a the encoder-decoder pair of VAE nor the discriminator-generator pair of a GAN. Moreover, no hyper-parameters other than $m$ and $T$ are required.   

\section{Known Limitations}\label{app:limitations}

Despite its performance and appealing properties, the method suffers from some important limitations we highlight below and that can serve as a basis for future work. 

\subsection{Gradient Norm Preserving networks (GNP) approximation power}

The performance of the algorithm strongly depends on its capacity to properly learn the true minimizer $f^{*}$ of $\hkr$ loss. Per Property~\ref{prop:gnphkr} such minimizer must fulfill $\|\nabla_x f^{*}(x)\|_2=1$ everywhere on the support of $\Prob_X$ and $Q_t$. Hence the performance of the algorithm (and the associated theoretical guarantees) depends on the capacity of the GNP network to fulfil this property. In the tabular case, it is easy to do using orthogonal matrices for affine layers and GroupSort (or FullSort~\cite{anil2019sorting}) activation functions. However, in the image case, designing ``orthogonal convolution'' is still an active research area. Several solutions have been proposed, but they come with various drawbacks in terms of simplicity of implementation, computational cost, or tightness of the constraint. Hence the average gradient norm on image datasets struggles to exceed $0.3$ in practice. Another limitation stems from low-rank adjoint operators (e.g the last layer of the network): during backpropagation they do not preserve gradient norm along all directions.  
  
The Newton-Raphson trick that uses steps of size $\frac{\nabla_x f(x)}{\|\nabla_x f(x)\|_2^2}$ mitigates partially the issue. This suggests that the algorithm (in its current form) could benefit from further progress in Gradient Norm Preserving (GNP) architectures.  

\subsection{Limitations of the euclidean norm in image space}

The algorithm provides metric guarantees in the construction of the Signed Distance Function (SDF) to the boundary. The $l2$-norm is not a crucial component of the construction: the proof of~\ref{thm:hkrsdf} and Proposition 2 of~\cite{serrurier2021achieving} can be applied to any norm. However, in every case, the Lipschitz constraint $|f(x)-f(y)|\leq \|x-y\|_L$ on the network architecture must coincide with the norm $\|\cdot\|_L$ used to build the signed distance function. Currently, only networks that are Lipschitz with respect to $l^{\infty}$ and $l^2$ norms benefit from universal approximation properties~\cite{anil2019sorting}. Those norms are often meaningful for tabular data, but not for images. Hence, metric guarantees are less useful in pixel space. The method still benefits from certificates against adversarial attacks, which is highly desirable for critical systems but lacks semantic interpretation otherwise.  

\subsection{Tuning of margin}

The algorithm is not quite agnostic to the data: the margin $m>0$ used in $\hkr$ loss is an important parameter that serves as prior on the typical distance that separates the One Class support from the anomalies. This hyper-parameter can be guessed from the final application at hand (very much like the ``scale'' parameter of radial basis function kernels), manually with grid search algorithms or more extensive procedures. Theorem~\ref{thm:hkrsdf} suggests that a small margin $m$ works best. However, the VC dimension associated with the corresponding set of classifiers increases polynomially with $\frac{1}{m}$ (see Proposition 6 of~\cite{bethune2022pay}). Hence, the algorithm benefits from faster convergence and more stability during training when $m$ is big. Fortunately, this tradeoff present in most deep learning-based algorithms is solely controlled by this one-dimensional parameter in our case. Any heuristic estimation from data or with a one-dimensional line search is feasible, even with a limited computational budget.  

\section{Hardware}\label{app:hardware}

The hardware consists of a workstation with NVIDIA 1080 GPU with 8GB memory and a machine with 32GB RAM. Hence, while being based on deep learning, and obviously benefiting from faster hardware, the requirements are affordable to most practitioners. The typical duration of an epoch on most challenging datasets was under a minute.

\end{document}